\pdfoutput=1
\documentclass[acmsmall,screen,nonacm]{acmart}

\usepackage{booktabs}
\usepackage{array}
\usepackage{multirow}
\usepackage{tabularx}
\usepackage{makecell}
\usepackage{ragged2e}
\usepackage{tikz}
\usetikzlibrary{arrows.meta,positioning,fit,backgrounds,calc,shapes.geometric}

\newcolumntype{Y}{>{\RaggedRight\arraybackslash}X}
\setcellgapes{1pt}
\makegapedcells

\AtEndPreamble{%
  \theoremstyle{acmdefinition}%
  \newtheorem{remark}[theorem]{Remark}%
}

\begin{document}

\title{Learning-Augmented Algorithms: Guarantees, Construction Mechanisms, and System-Level Implications}

\author{Hailiang Zhao}
\affiliation{%
  \institution{School of Software Technology, Zhejiang University}
  \city{Ningbo}
  \country{China}}
\email{hliangzhao@zju.edu.cn}

\author{Xueyan Tang}
\affiliation{%
  \institution{School of Computer Science and Engineering, Nanyang Technological University}
  \city{Singapore}
  \country{Singapore}}
\email{asxytang@ntu.edu.sg}

\author{Peng Chen}
\affiliation{%
  \institution{College of Computer Science and Technology, Zhejiang University}
  \city{Hangzhou}
  \country{China}}
\email{pgchen@zju.edu.cn}

\author{Zenong Ye}
\affiliation{%
  \institution{School of Software Technology, Zhejiang University}
  \city{Ningbo}
  \country{China}}
\email{yezenong@zju.edu.cn}

\author{Tianjun Zhu}
\affiliation{%
  \institution{School of Engineering, Stanford University}
  \city{Stanford}
  \country{United States}}
\email{tizhu@stanford.edu}

\author{Jianwei Yin}
\affiliation{%
  \institution{College of Computer Science and Technology, Zhejiang University}
  \city{Hangzhou}
  \country{China}}
\email{zjuyjw@zju.edu.cn}

\author{Shuiguang Deng}
\affiliation{%
  \institution{College of Computer Science and Technology, Zhejiang University}
  \city{Hangzhou}
  \country{China}}
\email{dengsg@zju.edu.cn}

\renewcommand{\shortauthors}{Zhao et al.}

\begin{abstract}
Learning-augmented algorithms use fallible predictions while retaining formal performance guarantees. This survey synthesizes prediction interfaces, error measures, consistency--robustness trade-offs, and five representative construction mechanisms across online optimization, caching, learned data structures, graph problems, and mechanism design. An orthogonal theorem-level axis distinguishes achieved upper bounds from matched asymptotic dependence. Formal guarantees are separated from empirical systems evidence, with explicit treatment of prediction cost, feedback, and composition. The resulting synthesis states sufficient conditions for limited end-to-end reasoning and delineates open problems in cost-aware prediction, endogenous error, semantic predictors, and benchmarking.
\end{abstract}

\begin{CCSXML}
<ccs2012>
<concept>
<concept_id>10003752.10003809.10003635</concept_id>
<concept_desc>Theory of computation~Online algorithms</concept_desc>
<concept_significance>500</concept_significance>
</concept>
<concept>
<concept_id>10003752.10003809.10010047</concept_id>
<concept_desc>Theory of computation~Online learning algorithms</concept_desc>
<concept_significance>300</concept_significance>
</concept>
<concept>
<concept_id>10003752.10003809.10010052</concept_id>
<concept_desc>Theory of computation~Data structures design and analysis</concept_desc>
<concept_significance>300</concept_significance>
</concept>
<concept>
<concept_id>10010147.10010257</concept_id>
<concept_desc>Computing methodologies~Machine learning</concept_desc>
<concept_significance>300</concept_significance>
</concept>
</ccs2012>
\end{CCSXML}

\ccsdesc[500]{Theory of computation~Online algorithms}
\ccsdesc[300]{Theory of computation~Online learning algorithms}
\ccsdesc[300]{Theory of computation~Data structures design and analysis}
\ccsdesc[300]{Computing methodologies~Machine learning}
\keywords{Algorithms with predictions, error-sensitive guarantees, consistency--robustness trade-offs, prediction portfolios, prediction cost, predictor queries, learned indexes, mechanism design}

\maketitle

\section{Introduction}
\label{sec:intro}

Consider a cloud operator investigating a latency spike. The following three hypothetical scenarios illustrate failure modes that are relevant to the scope of this survey.

\begin{itemize}
    \item[\textbf{(S1)}] An autoscaler trained on earlier traffic over-provisions after the workload changes. Its prediction is stale, and the cost of updating the predictor is not included in the scaling objective.
    \item[\textbf{(S2)}] A caching layer uses a learned eviction policy tuned for average-case hit rate. Under request patterns outside its evaluation distribution, the policy has no stated prediction-independent worst-case guarantee.
    \item[\textbf{(S3)}] A query planner relies on a cardinality estimate invalidated by a schema migration and selects a poor join order. A downstream scheduler does not observe the source of the resulting delay.
\end{itemize}

These scenarios expose a recurring tension: algorithms that ignore predictable structure can be conservative, whereas learned heuristics often lack a useful worst-case statement outside their evaluation assumptions. \textit{Learning-Augmented Algorithms} (LAA) provide one formal interface between the two~\cite{mitzenmacher2021algorithms}. They admit a possibly inaccurate prediction and distinguish behavior under perfect predictions (\textit{consistency}), arbitrary predictions (\textit{robustness}), and intermediate error (often called \textit{smoothness})~\cite{purohit2018improving}. Results cover, e.g., ski rental, caching, scheduling, and matching~\cite{purohit2018improving,rohatgi2020near,lattanzi2020online,choo2025fractional}; related prediction-sensitive analyses appear in learned indexes~\cite{kraska2018case,ferragina2020pgm}. Predictor computation, feedback, retraining, and cross-component effects are usually outside the component objective. This is distinct from stationarity: robustness is commonly proved for adversarial inputs.

To connect these results with deployment, we use \textit{learning-augmented systems} (LAS) as shorthand for a system-level reading of the literature. This reading makes prediction, decision, actuation, and monitoring explicit so that omitted costs and interface assumptions can be audited. LAS is an organizing perspective in this survey, not a claim that architecture alone composes component guarantees. Section~\ref{sec:las_architecture} states limited sufficient conditions and the assumptions they require. Section~\ref{sec:llm_integration} applies the same discipline to semantic predictions. A language model may provide a typed input, but no consistency--robustness guarantee follows without a new model and proof.

\subsection{Scope}
\label{subsec:scope}

Prior surveys organize the literature by prediction type, warm-start versus online use, or advice complexity~\cite{mitzenmacher2021algorithms,mitzenmacher2022survey,boyar2017advice,boyar2023predictions}; the ALPS index maintains a living bibliography. We instead classify \emph{where and how a prediction enters a construction or analysis} and examine the assumptions needed when prediction-consuming components share compute, state, and failure domains. A direction not located in our search is labeled unverified or open in the surveyed corpus, not absent from the literature.

\subsection{Organization}
\label{subsec:organization}

Section~\ref{sec:method} states the method and limitations. Section~\ref{sec:framework} fixes the formal framework, and Section~\ref{sec:paradigms} presents five construction mechanisms plus an orthogonal evidence axis. Sections~\ref{sec:domains}--\ref{sec:open_challenges} cover domains, system composition, semantic prediction, and open problems. Recurrent qualifications concerning error measures, prediction cost, ex ante trust, and composition are made explicit where used.

\section{Scope and Method}
\label{sec:method}

This section states the questions, corpus construction, taxonomy derivation, and limitations.

\subsection{Research Questions}
\label{subsec:rq}

Four questions organize the material.

\begin{itemize}
\item[\textbf{RQ1}] What reusable technical devices convert an untrusted prediction into a provable guarantee, and under what conditions does each apply?
\item[\textbf{RQ2}] Within the surveyed corpus, which stated trade-offs or error dependencies are paired with matching lower bounds under the same model, and which are recorded only as achieved upper bounds?
\item[\textbf{RQ3}] Which assumptions of the algorithmic contract require additional modeling when the algorithm is one component of a running system?
\item[\textbf{RQ4}] What is established, what remains conditional, and what is unsupported in the recent turn toward predictions produced by foundation models?
\end{itemize}

\subsection{Corpus Construction}
\label{subsec:corpus}

\paragraph{Sources and update protocol} Entry points were prior surveys~\cite{mitzenmacher2021algorithms,mitzenmacher2022survey,boyar2017advice}, the ALPS index,\footnote{\url{https://algorithms-with-predictions.github.io/}} and early caching and ski-rental papers~\cite{lykouris2018competitive,purohit2018improving}. Citation snowballing and targeted searches of major publisher, proceedings, and preprint indexes continued through 3 September 2026. Queries combined field-level terms with application, mechanism, prediction-budget, uncertainty, and foundation-model terms. The protocol was broad but not exhaustive.

\paragraph{Review type and audit trail} This is a structured narrative survey, not a registered systematic review. Snowballing and targeted queries replaced a single database export, and no deduplicated screening log was retained. We therefore report neither PRISMA counts nor empty taxonomy cells as absence of research. The bibliography identifies the cited corpus; Table~\ref{tab:laa_taxonomy} gives selected examples of the E0/E1 coding, and Table~\ref{tab:domain_overview} maps mechanisms to named results. These records facilitate checking of representative claims without implying an exhaustive claim-level audit.

\paragraph{Inclusion and exclusion} The core corpus contains algorithms with a stated consistency, robustness, or explicit error-dependent guarantee. Learned data structures and deployed components are used only as labeled adjacent evidence about interfaces and failure modes; empirical or average-case gains are not promoted to LAA guarantees. Advice complexity is background unless coupled to an error-sensitive prediction guarantee, and offline learning of a fixed configuration lies outside the core corpus.

\paragraph{A note on verifiability} Metadata and representative claims were checked against publisher records or authoritative preprints. Interpretive assignments are author synthesis; missing combinations are questions suggested by the corpus, not absence claims.

\subsection{How the Taxonomy Was Derived}
\label{subsec:taxonomy_method}

\paragraph{Unit of classification} Axis A classifies construction mechanisms, so one paper may receive several P1--P5 labels. Axis B classifies theorem records: E0 means that this survey does not pair the upper bound with a matching lower bound in the same formal model; E1 means matched asymptotic dependence. N/A denotes empirical or adjacent records outside Axis B, not a third evidence level.

\paragraph{The two classifying questions} Axis A asks where prediction enters: P1 combines prediction-following and prediction-independent behavior; P2 modifies an optimization update while preserving feasibility; P3 couples a prediction-sensitive fractional construction with rounding; P4 controls access, switching, queries, or portfolios; and P5 uses a predicted distribution directly. Axis B asks whether the dependence is achieved (E0) or matched in the same model (E1). The mechanisms are non-exclusive and incomplete.

\paragraph{Coding procedure} For each E0/E1 row, we compared the prediction object, adversary, benchmark, objective, error, and bound assumptions at theorem level. E1 requires agreement on every model-defining dimension. No independent second coder or inter-rater statistic is reported; the tables are navigation aids, not prevalence estimates.

\subsection{Relation to Existing Classifications}
\label{subsec:paradigm_relation}

Prior organizations cut along different axes. Mitzenmacher and Vassilvitskii~\cite{mitzenmacher2021algorithms,mitzenmacher2022survey} emphasize what is predicted and the warm-start-versus-online distinction. Boyar et al.~\cite{boyar2017advice} classify by advice information, measured in bits. These views are complementary but not interchangeable: a full sequence of next-request predictions may contain far more information than an advice bound with the same per-symbol alphabet. Succinct-prediction results make that budget explicit~\cite{antoniadis2023succinct}. Table~\ref{tab:prior_surveys} states the incremental scope claimed here.

\begin{table}[t!]
\centering
\setlength{\tabcolsep}{3pt}
\caption{Relation to the principal surveys used as entry points. ``Present'' describes the scope of this article, not a claim of exhaustive coverage.}
\label{tab:prior_surveys}
\footnotesize
\begin{tabular}{@{}>{\raggedright\arraybackslash}p{2.55cm}>{\raggedright\arraybackslash}p{2.45cm}>{\raggedright\arraybackslash}p{3.37cm}>{\raggedright\arraybackslash}p{4.65cm}@{}}
\toprule
\textbf{Source} & \textbf{Primary organizing axis} & \textbf{Coverage emphasis} & \textbf{Difference from the present survey} \\
\midrule
Boyar et al.~\cite{boyar2017advice} & Advice complexity in bits & Online algorithms with advice & Predictions are treated as fallible task objects with error-dependent guarantees, not only information budgets. \\
Mitzenmacher and Vassilvitskii~\cite{mitzenmacher2021algorithms,mitzenmacher2022survey} & Predicted object; warm start vs. online use & Foundational algorithms-with-predictions examples & The present survey separates construction mechanism from tightness evidence and extends the comparison to systems interfaces. \\
Present survey & Mechanism axis P1--P5; tightness-status axis E0/E1 & Five algorithmic domains, adjacent systems, composition, and semantic interfaces & Adds theorem-level matching-bound status and conditions for cost, feedback, and cross-component reasoning. \\
\bottomrule
\end{tabular}
\end{table}

\subsection{The Shape of the Literature Over Time}
\label{subsec:timeline}

From 2018 to roughly 2022, work popularized the consistency--robustness contract and sharpened trade-offs for ski rental, caching, weighted paging, and metric tasks~\cite{purohit2018improving,lykouris2018competitive,wei2020optimal,rohatgi2020near,antoniadis2020metric,bansal2022weighted}. Later work broadened predictions to distributions, explicit learners, portfolios, and query budgets, and expanded strategic, dynamic, streaming, and graph applications~\cite{dinitz2024binarysearch,elias2024explicit,dinitz2022portfolios,im2022parsimonious,balkanski2024randomized,dong2025correlation,polak2026apsp}. Empirical benchmarks expose cost, feedback, and deployment questions~\cite{chledowski2021robust,wongkham2022updatable,wang2021cardinality}. This is a qualitative shift in emphasis; tightness remains relative to a particular model.

\subsection{Threats to Validity}
\label{subsec:threats}

Three limitations should temper how the reader uses what follows.
\begin{itemize}
  \item P1--P5 are a working set of construction mechanisms, not an exhaustive partition; papers may occupy multiple rows. E0/E1 records theorem-level evidence on a separate axis.
  \item The assignments in Table~\ref{tab:laa_taxonomy} reflect the works found by the targeted search and are qualitative. They are not paper counts or evidence that an unassigned combination is empty.
  \item The systems-oriented discussion includes empirical systems without competitive guarantees. Similarities to formal robustifiers, fallbacks, or restricted action spaces are architectural analogies unless a reduction is stated.

\end{itemize}

\section{Background and Formal Framework}
\label{sec:framework}

This section fixes the vocabulary used throughout the survey. We first give the notation and three commonly reported evaluation criteria (\S\ref{subsec:axioms}), then distinguish what a predictor outputs (\S\ref{subsec:predtypes}) from how its error is measured (\S\ref{subsec:errmeasures}). Not every paper proves all three criteria, and several use additive rather than purely multiplicative guarantees. We close with a worked ski-rental example.

\subsection{Notation}
\label{subsec:notation}

Table~\ref{tab:nomenclature} collects the notation. Let $\mathcal I$ be the instance space, $z(I)$ the information observable when a prediction is issued, and $\hat y=P(z(I))\in\mathcal Y$ the output; thus the algorithm receives no unrevealed part of $I$. For unknown target $y^*(I)$, define
\begin{equation}
    \eta(I,\hat y) = \ell_I \big(\hat{y}, y^*(I) \big),
    \label{eq_error}
\end{equation}
where $\ell_I$ is problem specific and may include the normalization used by the cited theorem. Write $\mathrm{cost}(\mathcal A(I,\hat y))$ for algorithmic cost and $\mathrm{OPT}(I)$ for offline optimum. The symbol $\mathbb E$ ranges over every random variable admitted by the theorem. For a fixed adversarial instance this may be only the algorithm's internal randomness; a stochastic or distributional model must additionally identify randomness in the realized input, target, or oracle. Unless stated otherwise, definitions concern minimization and ratios assume $\mathrm{OPT}(I)>0$; zero optima need an additive convention. Maximization results use $\mathrm{OPT}/\mathbb E[\mathrm{ALG}]$ or an explicit approximation factor.

\begin{table}[t!]
\centering
\setlength{\tabcolsep}{3pt}
\caption{Nomenclature.}
\label{tab:nomenclature}
\footnotesize
\begin{tabular}{@{}>{$}l<{$}@{\hspace{1.5em}}p{6.4cm}@{}}
\toprule
\multicolumn{1}{@{}l}{\textbf{Symbol}} & \textbf{Meaning} \\
\midrule
\mathcal{I},\, I           & Instance space; a single instance \\
\mathcal{Y},\, \hat{y}     & Prediction space; a predictor output \\
y^*(I)                     & True (unknown) label of instance $I$ \\
\eta(I,\hat y)             & Prediction error $\ell_I(\hat{y}, y^*(I))$, including stated normalization \\
\ell_I                     & Instance-aware error measure (see Table~\ref{tab:predtypes}) \\
\mathrm{OPT}(I)            & Optimal offline cost with full knowledge of $I$ \\
\mathrm{ALG},\, \mathcal{A} & The online (learning-augmented) algorithm \\
\mathrm{CR}                & Expected competitive ratio for cost minimization \\
\lambda \in [0,1]          & Trust parameter; smaller $\lambda$ means more trust here \\
c(\lambda)                 & Consistency: ratio attained when $\eta = 0$ \\
r(\lambda)                 & Robustness: worst-case ratio over all $\hat{y}$ \\
f_\lambda(\eta)           & Smoothness curve at trust $\lambda$; omit the subscript when absent \\
\sigma = (\sigma_1,\dots,\sigma_T) & Request sequence of length $T$ \\
k                          & Cache size (\S\ref{subsec:caching}) \\
H_k                        & $k$-th harmonic number $\sum_{i=1}^{k} 1/i$ \\
B                          & Purchase cost in ski rental (\S\ref{subsec:running}) \\
\bottomrule
\end{tabular}
\end{table}

\subsection{Evaluation Criteria}
\label{subsec:axioms}

A learning-augmented algorithm is commonly evaluated by the following quantities~\cite{mitzenmacher2021algorithms}. Throughout, $\lambda \in [0,1]$ denotes a \emph{trust parameter} chosen before the unrevealed part of the instance is seen. Parameter conventions differ across papers; in this survey and the running ski-rental example, smaller $\lambda$ denotes greater trust. The parameter is not a function of the realized error, which is unavailable online.

\begin{definition}[Consistency]
\label{def:consistency}
$\mathcal{A}$ is $c(\lambda)$-\emph{consistent} if, whenever the prediction is exact,
\begin{equation}
    \mathbb{E}\big[\mathrm{cost}(\mathcal{A}(I, \hat{y}))\big] \leq c(\lambda) \cdot \mathrm{OPT}(I)
    \qquad \text{for all } I,\hat y \text{ with } \eta(I,\hat y) = 0.
    \label{eq_cons}
\end{equation}
The expectation follows the probability space declared above. The special case $c(\lambda) = 1$ is called \emph{$1$-consistency}: the algorithm matches the offline optimum exactly under perfect predictions. Some results instead achieve $c(\lambda) = 1 + o(1)$ as a problem parameter grows, e.g., the horizon $T \to \infty$. We say so explicitly whenever the distinction matters.
\end{definition}

\begin{definition}[Robustness]
\label{def:robustness}
$\mathcal{A}$ is $r(\lambda)$-\emph{robust} if
\begin{equation}
    \frac{\mathbb{E}[\mathrm{cost}(\mathcal{A}(I, \hat{y}))]}{\mathrm{OPT}(I)} \leq r(\lambda)
    \qquad \text{for all } I \in \mathcal{I} \text{ and all } \hat{y} \in \mathcal{Y},
    \label{eq_rob}
\end{equation}
where the expectation follows the model's declared probability space and $r(\lambda)$ does \emph{not} depend on $\eta$. The bound must hold even for arbitrarily inaccurate predictions. Robustness need not be as small as the best prediction-free ratio $\alpha^\star$: a trade-off may accept $r(\lambda)>\alpha^\star$ in exchange for improved consistency. An algorithm can recover $\alpha^\star$ by explicitly ignoring the prediction, but a fixed trust setting need not do so.
\end{definition}

\begin{definition}[Smoothness]
\label{def:smoothness}
At trust setting $\lambda$, $\mathcal{A}$ has an $f_\lambda$-\emph{error-dependent guarantee} (often called smoothness) if there is a non-decreasing envelope $f_\lambda$ such that
\begin{equation}
    \frac{\mathbb{E}[\mathrm{cost}(\mathcal{A}(I, \hat{y}))]}{\mathrm{OPT}(I)} \leq f_\lambda(\eta(I,\hat y))
    \qquad \text{for all } I \in \mathcal{I},\ \hat{y} \in \mathcal{Y}.
    \label{eq_smooth}
\end{equation}
When $f_\lambda(0)=c(\lambda)$ the bound recovers the stated consistency guarantee, and when $f_\lambda$ is uniformly bounded it also implies robustness. We do not require continuity because error domains and published bounds may be discrete, piecewise, or include additive terms. If a result has the form $\mathbb{E}[\mathrm{cost}]\le f_\lambda(\eta)\mathrm{OPT}+b$, we report the additive constant $b$ rather than suppress it. For results without a trust parameter, the subscript is omitted.
\end{definition}

Many papers report a consistency--robustness pair traced by $\lambda$; a particular frontier is \emph{tight} only when an accompanying lower bound rules out every dominating pair in the same model.

The term ``smoothness'' is not fully standardized. Azar et al.'s discrete-smoothness separates predicted and unpredicted parts of a combinatorial solution~\cite{azar2023discrete}, whereas Definition~\ref{def:smoothness} uses the broader convention of any stated error-dependent envelope. Moreover, a Pareto-optimal consistency--robustness pair need not control performance at small nonzero error; one-way trading provides explicit brittleness examples and motivates user-specified smoothness profiles~\cite{elenter2024brittleness}.

\begin{remark}[Smoothness is not robustness]
\label{rem:smooth_not_robust}
The distinction in Definitions~\ref{def:robustness} and~\ref{def:smoothness} is substantive. A bound $\mathrm{CR}\le f_\lambda(\eta)$ with unbounded $f_\lambda$ provides no prediction-independent cap. Robustness requires a problem-specific construction, such as explicit interpolation with a baseline, an algorithm-combination theorem that accounts for state, or a formally analyzed robustifier.
\end{remark}

\subsection{Prediction Objects and Representations}
\label{subsec:predtypes}

Table~\ref{tab:predtypes} separates three levels that are easily conflated: the task object being predicted, an optional uncertainty annotation, and the representation used at the interface. The first level constrains meaningful constructions and errors but does not determine a paradigm. The latter two can qualify any task object.

\emph{Point predictions} estimate a scalar or vector; \emph{distributional predictions} specify a distribution whose calibration requirements depend on the theorem~\cite{dinitz2024binarysearch,angelopoulos2024distributional}; and \emph{ordinal predictions} give rankings that can suffice in scheduling~\cite{lindermayr2023speed}. \emph{Structural predictions} name combinatorial objects such as matchings or dual solutions~\cite{dinitz2021duals,lavastida2021instancerobust}. An \emph{uncertainty annotation} adds, e.g., a range and a coverage probability to a point prediction~\cite{sun2024uncertainty}; it is not necessarily a full predictive distribution. This model is also distinct from $\epsilon$-accurate advice, where each prediction is independently correct with probability at least $\epsilon$ and may otherwise be arbitrary~\cite{gupta2022epsilon}. A \emph{semantic representation} is free-form and requires an adapter plus a task-defined error before an LAA theorem applies (Section~\ref{sec:llm_integration}).

\begin{table}[t!]
\centering
\setlength{\tabcolsep}{3pt}
\caption{Task objects and orthogonal interface qualifiers, with typical error measures and mechanisms used in cited examples. Evidence status E0/E1 is assigned separately to individual theorems. Entries are illustrative rather than exclusive.}
\label{tab:predtypes}
\footnotesize
\begin{tabular}{@{}
    >{\raggedright\arraybackslash}p{1.73cm}
    >{\raggedright\arraybackslash}p{2.89cm}
    >{\raggedright\arraybackslash}p{2.81cm}
    >{\raggedright\arraybackslash}p{2.47cm}
    >{\raggedright\arraybackslash}p{2.81cm}
@{}}
\toprule
\textbf{Level and type} & \textbf{Predictor output} & \textbf{Typical error measure} & \textbf{Mechanisms in cited examples} & \textbf{Representative use} \\
\midrule
Task object: point
& A scalar or vector estimate of $y^*$
& $\ell_1$, $\ell_\infty$, $|\hat{y}-y^*|/y^*$
& P1--P4
& Ski rental horizon; job sizes; next request time~\cite{purohit2018improving,lykouris2018competitive} \\
\addlinespace[1pt]
Task object: distributional
& A distribution $\hat{\mathcal{D}}$ over $\mathcal{Y}$
& $W_p(\hat{\mathcal{D}}, \mathcal{D}^*)$, $d_{\mathrm{TV}}$, KL
& P4, P5
& Binary search over a predicted key distribution; contract scheduling~\cite{dinitz2024binarysearch,angelopoulos2024distributional} \\
\addlinespace[1pt]
Task object: ordinal
& A ranking or pairwise comparisons
& Kendall $\tau$; number of inversions
& P1, P3, P4
& Speed-oblivious scheduling; relative job order~\cite{lindermayr2023speed} \\
\addlinespace[1pt]
Task object: structural
& A combinatorial object: dual solution, matching, edge subset
& Symmetric difference; $\|\hat{\mathbf{y}} - \mathbf{y}^*\|_1$; edit distance
& P1--P4
& Warm-started matchings and flows; predicted dual variables~\cite{dinitz2021duals,lavastida2021instancerobust,davies2023flows} \\
\addlinespace[1pt]
Representation: semantic
& Unstructured text, code, or policy fragments
& \emph{No canonical measure}; requires task-defined semantics and a loss
& None directly; needs an adapter (\S\ref{sec:llm_integration})
& Log interpretation; fallback-policy synthesis \\
\bottomrule
\end{tabular}
\end{table}

\subsection{How Error Is Measured}
\label{subsec:errmeasures}

The loss $\ell_I$ is substantive, so curves are comparable only when error definitions, normalization, and benchmarks align. Absolute and relative error produce different curves; relative normalization also needs positive, meaningful denominators. Aggregation differs as well: $\ell_1$ charges all coordinates, whereas $\ell_\infty$ charges the worst, and neither reveals how caching errors are distributed over requests. Finally, observability is distinct from theorem validity. A late-revealed target supports ex post analysis but cannot directly drive an online controller; a deployment instead needs timely proxies, constraints, or reversible fallback rules (Section~\ref{sec:las_architecture}).

\subsection{A Running Example: Ski Rental}
\label{subsec:running}

Ski rental exhibits all three criteria. For unknown skiing days $T$, daily rent $1$, and one-time purchase cost $B$, $\mathrm{OPT}=\min\{T,B\}$. Without predictions, the optimal deterministic and randomized ratios are $2$ and $e/(e-1)\approx1.58$~\cite{karlin1988snoopy,karlin1994competitive}.

Now suppose a predictor supplies $\hat{T}$. Fix a trust parameter $\lambda \in (0,1)$ and run the deterministic rule of Purohit et al.~\cite{purohit2018improving}: if $\hat{T} \geq B$, buy at the start of day $\lceil \lambda B \rceil$; otherwise buy at the start of day $\lceil B/\lambda \rceil$. Writing $\eta_{\mathrm{abs}} = |\hat{T} - T|$, the bound stated for this rule is
\begin{equation}
    \mathrm{CR} \;\leq\; \min\!\left\{\, 1 + \tfrac{1}{\lambda},\ \ 1 + \lambda + \tfrac{\eta_{\mathrm{abs}}}{(1-\lambda)\,\mathrm{OPT}} \,\right\}.
    \label{eq_ski_cr}
\end{equation}
The three criteria can be read directly off~\eqref{eq_ski_cr}. Setting $\eta_{\mathrm{abs}} = 0$ gives $c(\lambda) = 1 + \lambda$. The first branch is independent of the error and gives $r(\lambda) = 1 + 1/\lambda$. For the normalized error $\eta(I,\hat T)=\eta_{\mathrm{abs}}/\mathrm{OPT}$, the second branch is the well-defined envelope $f_\lambda(\eta)=1+\lambda+\eta/(1-\lambda)$. The endpoint $\lambda=1$ is understood separately as the classical deterministic algorithm.

For $B=100$ and $\lambda=1/2$, the long-season branch buys on day $50$ and the short-season branch defers to day $200$, giving guarantees $c\leq1.5$ and $r\leq3$. As $\lambda\to0$, consistency approaches $1$ while robustness diverges; $\lambda=1$ recovers the classical ratio $2$. The displayed deterministic guarantee curve is asymptotically Pareto-optimal as $B$ grows; exact finite-$B$ lower bounds retain integer-rounding terms. The corresponding randomized trade-off is characterized separately~\cite{wei2020optimal}.

\begin{figure}[t!]
\centering
\begin{tikzpicture}[x=3.85cm, y=0.60cm, font=\footnotesize,
  ax/.style={-{Stealth[length=4.5pt]}, semithick}]

\fill[black!7] plot[domain=1.19:2.0, samples=100] (\x, {1 + 1/(\x-1)}) -- (2.0,1) -- (1.19,1) -- cycle;

\draw[ax] (1.0,1) -- (2.20,1);
\node[font=\scriptsize, anchor=north] at (1.72,0.20) {consistency $c(\lambda)$};
\draw[ax] (1.0,1) -- (1.0,7.3);
\node[rotate=90, anchor=south, font=\scriptsize] at (0.878,4.2) {robustness $r(\lambda)$};

\foreach \x/\lab in {1/$1$, 1.5/$1.5$, 2/$2$}
  {\draw (\x,0.90) -- (\x,1.10); \node[font=\scriptsize, below=1pt] at (\x,0.92) {\lab};}
\foreach \y/\lab in {1/$1$, 2/$2$, 3/$3$, 5/$5$, 7/$7$}
  {\draw (0.990,\y) -- (1.010,\y); \node[font=\scriptsize, left=2pt] at (0.990,\y) {\lab};}

\draw[line width=1pt] plot[domain=1.175:2.0, samples=120] (\x, {1 + 1/(\x-1)});

\fill (2,2) circle (1.6pt);
\node[anchor=south west, font=\scriptsize, align=left, inner sep=2pt] at (2.03,2.02)
  {$\lambda=1$: the classical\\ prediction-free algorithm};
\fill (1.5,3) circle (1.6pt);
\node[anchor=south west, font=\scriptsize, inner sep=2pt] at (1.53,3.05) {$\lambda=\tfrac12$};
\fill (1.25,5) circle (1.6pt);
\node[anchor=south west, font=\scriptsize, inner sep=2pt] at (1.28,5.05) {$\lambda=\tfrac14$};
\node[anchor=west, font=\scriptsize, align=left, inner sep=2pt] at (1.04,6.85)
  {$\lambda\to0$: blind trust, $r\to\infty$};

\node[font=\scriptsize, black!30, align=center] at (1.38,2.40) {\textit{asymptotically}\\[-1pt]\textit{infeasible}};
\end{tikzpicture}
\caption{The deterministic ski-rental guarantee curve $\big(c(\lambda),r(\lambda)\big)=\big(1+\lambda,1+1/\lambda\big)$ for $\lambda\in(0,1]$. Every point corresponds to a parameterized guarantee, and the shaded region is asymptotically unattainable as the integer purchase cost $B$ grows; finite-$B$ lower bounds include rounding terms. The endpoint $\lambda=1$ is the classical $2$-competitive rule, and exact $1$-consistency is approached only as robustness diverges.}
\label{fig:pareto}
\Description{A convex decreasing guarantee curve in the plane whose horizontal axis is consistency and whose vertical axis is robustness, showing the relation r equals one plus one over the quantity c minus one on the valid domain where consistency is greater than one and at most two. The curve rises steeply as consistency approaches one and ends at the point where consistency and robustness both equal two. Three points are marked, corresponding to trust parameter values of one, one half and one quarter. The region below and to the left of the displayed curve is shaded and labeled asymptotically infeasible.}
\end{figure}

Two features recur throughout the survey. First, the trust parameter is fixed before the realized error is known. Adaptive trust therefore requires an observable proxy and a separate theorem, such as a problem-specific algorithm-combination result. Second, the error term is additive in $\eta/\mathrm{OPT}$ but scaled by $1/(1-\lambda)$; error and trust cannot be interpreted independently. This interaction remains relevant once predictor invocations carry cost (Section~\ref{sec:open_challenges}).

\section{Design Paradigms for Learning-Augmented Algorithms}
\label{sec:paradigms}

\begin{figure}[t!]
\centering
\resizebox{\textwidth}{!}{%
\begin{tikzpicture}[
  font=\footnotesize,
  p/.style={draw, thick, rounded corners=2pt, align=center, text width=2.45cm, minimum height=1.25cm, inner sep=4pt, fill=black!6},
  e/.style={draw, rounded corners=2pt, align=center, text width=4.4cm, minimum height=0.95cm, inner sep=4pt}
]
\node[font=\small\bfseries] at (0,1.15) {Axis A: where the prediction enters the construction (non-exclusive)};
\node[p] (p1) at (-6.1,0) {\textbf{P1}\\Trust-based combination};
\node[p] (p2) at (-3.05,0) {\textbf{P2}\\Prediction-guided primal--dual update};
\node[p] (p3) at (0,0) {\textbf{P3}\\Sensitive relaxation plus rounding};
\node[p] (p4) at (3.05,0) {\textbf{P4}\\Prediction access, portfolios, and switching};
\node[p] (p5) at (6.1,0) {\textbf{P5}\\Distribution-guided decision rule};

\node[font=\scriptsize, anchor=north] at (0,-1.08) {A paper may occupy several mechanism positions; predictor type alone does not select one.};

\node[font=\small\bfseries] at (0,-2.20) {Axis B: theorem-level evidence status (orthogonal to P1--P5)};
\node[e,fill=black!2] (e0) at (-2.65,-3.25) {\textbf{E0: achieved bound}\\No matching lower bound paired in the same model};
\node[e,fill=black!10] (e1) at (2.65,-3.25) {\textbf{E1: matched dependence}\\Upper and lower bounds agree under the same formal assumptions};
\node[font=\scriptsize] at (0,-4.15) {E0/E1 records matching-bound status, not a general quality or maturity ranking.};
\end{tikzpicture}}
\caption{The two-axis taxonomy. Axis A records one or more construction mechanisms P1--P5. Axis B separately records whether a theorem provides an achieved upper bound (E0) or whether its error dependence is matched by a lower bound under the same prediction model, error measure, adversary, benchmark, and computational assumptions (E1). Neither coordinate follows from the predictor type alone.}
\label{fig:paradigm_tree}
\Description{A two-axis classification diagram. The horizontal construction axis contains five non-exclusive boxes: trust-based combination, prediction-guided primal-dual update, prediction-sensitive relaxation and rounding, prediction access and portfolios with switching, and distribution-guided decision rules. A separate tightness-status axis contains an achieved-bound state E0 and a matched-dependence state E1. The two states are not presented as a general quality ranking.}
\end{figure}

Axis A uses five labels to organize construction mechanisms that are recurring or representative in the surveyed corpus. They are non-exclusive and not equally broad; in particular, P3 is represented by a narrower line of work than P1, P2, P4, or P5. Axis B separately records tightness status at theorem level. Table~\ref{tab:paradigm_summary} summarizes both axes, and Figure~\ref{fig:paradigm_tree} shows how to read a pair of coordinates.

\begin{table}[t!]
\centering
\setlength{\tabcolsep}{3pt}
\caption{The two taxonomy axes. P1--P5 describe construction mechanisms; E0/E1 describes theorem-level evidence. A label alone implies no guarantee.}
\label{tab:paradigm_summary}
\footnotesize
\begin{tabular}{@{}
    >{\centering\arraybackslash}p{0.57cm}
    >{\raggedright\arraybackslash}p{2.69cm}
    >{\raggedright\arraybackslash}p{3.08cm}
    >{\raggedright\arraybackslash}p{1.84cm}
    >{\raggedright\arraybackslash}p{3.08cm}
    >{\raggedright\arraybackslash}p{1.23cm}
@{}}
\toprule
& \textbf{Core mechanism} & \textbf{What is guaranteed} & \textbf{Typical scope} & \textbf{Failure mode} & \textbf{Refs.} \\
\midrule
\textbf{P1}
& A single algorithm parameterized by a trust value $\lambda$ fixed offline, or a randomized mixture of two
& A theorem-specific pair $\big(c(\lambda), r(\lambda)\big)$ when both branches are controlled
& \textit{Online problems with a known prediction-free baseline}
& Needs that baseline to exist; dynamic switching additionally requires state-migration accounting
& \cite{purohit2018improving,gollapudi2019rentorbuy,wei2020optimal,grigorescu2025packing} \\
\addlinespace[3pt]
\textbf{P2}
& Prediction initializes, guides, or repairs a primal or dual certificate in an optimization method
& A problem-specific competitive, approximation, or running-time bound from feasibility and dual-based analysis
& \textit{Online covering / packing; combinatorial warm starts}
& Requires an explicit relaxation, advice semantics, and invariant; a predicted dual alone gives no generic guarantee
& \cite{bamas2020primal,bamas2020energy,dinitz2021duals,sakaue2022dca,grigorescu2025packing} \\
\addlinespace[3pt]
\textbf{P3}
& Prediction guides a fractional solution; a separate online rounding theorem makes it integral
& Product of the fractional guarantee and the rounding factor, when their assumptions align
& \textit{Online scheduling and facility location}
& Rounding may dominate the ratio and need not preserve consistency
& \cite{lattanzi2020online,lattanzi2026facility} \\
\addlinespace[3pt]
\textbf{P4}
& Control which predictor or algorithm is consulted or followed, including query, switching, and state costs
& Competition with a specified benchmark or a bound using fewer prediction calls, under a problem-specific reduction
& \textit{Multiple predictors or costly access to one predictor}
& Prediction regret alone does not imply cost competitiveness or robustness
& \cite{anand2022multiple,dinitz2022portfolios,antoniadis2023mixing,cosa2025bandit,drygala2023costly,im2022parsimonious,sadek2024reduced} \\
\addlinespace[3pt]
\textbf{P5}
& Use a predicted distribution directly in a search, schedule, or other problem-specific rule
& Error-dependent performance in a stated distributional distance, plus a separate worst-case bound when proved
& \textit{Distributional predictions}
& No generic Wasserstein or risk theorem transfers across objectives
& \cite{dinitz2024binarysearch,angelopoulos2024distributional,angelopoulos2025bidding} \\
\midrule
\multicolumn{2}{@{}l}{\textbf{Axis B: evidence status}}
& \textbf{E0}: achieved upper bound; \textbf{E1}: matching lower bound under the same formal model
& \textit{Assigned per theorem}
& Tightness does not transfer across error measures, adversaries, benchmarks, or computational models
& \cite{rohatgi2020near,antoniadis2023metric,dinitz2024binarysearch} \\
\bottomrule
\end{tabular}
\end{table}

\begin{enumerate}
    \item \emph{\textbf{P1}: Trust-Based Combination.}
    Let $\mathcal{A}_c$ be a \emph{prediction-following} algorithm with consistency factor $c_c$, potentially unbounded when the prediction is inaccurate, and let $\mathcal{A}_r$ be a prediction-free algorithm with worst-case ratio $\alpha_r$. P1 combines them under a trust parameter $\lambda \in [0,1]$ fixed before the instance is revealed. Two distinct constructions share this name, and they are analyzed by different tools.

    \emph{(i) Randomized mixture.} To keep the survey's convention that smaller $\lambda$ means more trust, draw once at the start and run $\mathcal{A}_c$ with probability $1-\lambda$ and $\mathcal{A}_r$ with probability $\lambda$. By linearity of expectation,
    \begin{equation}
        \mathbb{E}[\mathrm{cost}] = (1-\lambda) \cdot \mathbb E[\mathrm{cost}(\mathcal{A}_c)] + \lambda\cdot \mathbb E[\mathrm{cost}(\mathcal{A}_r)],
        \label{eq_p1_mix}
    \end{equation}
    which yields $c(\lambda) \leq (1-\lambda)c_c + \lambda\alpha_r$; the displayed factor reduces to $(1-\lambda)+\lambda\alpha_r$ when $\mathcal{A}_c$ is $1$-consistent. If $\mathcal{A}_c$ has unbounded worst-case ratio, every mixture with $\lambda<1$ remains non-robust; randomization alone does not cap that branch.

    \emph{(ii) Parameterized interpolation.} A different construction builds a \emph{single} algorithm whose decision rule is continuously deformed by $\lambda$, as in the ski rental rule of \S\ref{subsec:running}, where $\lambda$ moves the purchase day between $\lceil \lambda B \rceil$ and $\lceil B/\lambda \rceil$. Here $\lambda$ does not select between two codebases. It tunes one. The design goal is to trace the Pareto frontier
    \begin{equation}
        \mathcal{P} = \Big\{ \big(c(\lambda), r(\lambda)\big) : \lambda \in (0,1] \Big\},
        \label{eq_p1_pareto}
    \end{equation}
    and a result is tight when no algorithm attains a pair dominating a point of $\mathcal{P}$. For ski rental, $(1+\lambda,\,1+1/\lambda)$ is the standard deterministic guarantee curve and is asymptotically Pareto-optimal as $B$ grows; finite-$B$ statements retain integer-rounding terms. The randomized trade-off is characterized separately~\cite{purohit2018improving,gollapudi2019rentorbuy,wei2020optimal}.

    The parameter $\lambda$ is not a function of the realized error unless that error is observable before the corresponding decision. Adaptive trust needs a causal proxy and a problem-specific analysis; neither a generic cost monitor nor a generic expert algorithm supplies a competitive ratio automatically.

    \emph{Applies when} a robust prediction-free baseline exists and either an up-front mixture or a single parameterized rule yields the stated guarantee. A one-shot mixture draws one branch before execution, so it neither switches online nor requires the two branches to share state. State compatibility and migration cost become additional obligations only for dynamic switching, which is classified under P4. This distinction matters in caching, where separately maintained cache states can be expensive to reconcile.

    \item \emph{\textbf{P2}: Prediction-Guided Primal--Dual Methods.}
    For online covering problems, Bamas et al.~\cite{bamas2020primal} treat a prediction as advice specifying a candidate solution or action sequence. The algorithm modifies the rates at which primal and dual variables increase, with $\lambda$ controlling how strongly the advice influences those rates. Feasibility and dual fitting then yield a bound of the form
    \begin{equation}
       \mathrm{cost}(\mathcal A) \leq
       \min\{C(\lambda)\,S(\hat y,I),\;R(\lambda)\,\mathrm{OPT}(I)\},
    \end{equation}
    where $S(\hat y,I)$ is the cost of following the advice on the realized instance. The exact updates and constants depend on the covering formulation; there is no universal additive ``predicted-dual bias'' whose cumulative magnitude alone proves robustness.

    A distinct use of predictions is an offline warm start. Dinitz et al.~\cite{dinitz2021duals} predict dual variables for matching and measure how repairing or augmenting that prediction affects running time. Sakaue and Oki~\cite{sakaue2022dca} extend this direction through discrete convex analysis, with applications including weighted perfect bipartite matching, weighted matroid intersection, and discrete energy minimization; their bounds use distance to an optimal discrete-convex solution. These works belong under P2 because feasibility and optimization certificates drive the analysis, but they predict different objects and improve running time rather than online cost. P2 applies only after the paper identifies the optimization formulation, advice semantics, and invariant that survives inaccurate advice.

    \item \emph{\textbf{P3}: Prediction-Sensitive Relaxation and Rounding.}
    A representative example is online scheduling via learned weights~\cite{lattanzi2020online}. The prediction is used to construct a fractional assignment whose competitive ratio depends on prediction error; a separate online rounding theorem converts an eligible fractional assignment into an integral schedule. A newer line develops online rounding specifically for facility location and applies it to learning-augmented integral facility location~\cite{lattanzi2026facility}. If the fractional stage is $g(\eta)$-competitive and the rounding incurs factor $\rho$, the combined guarantee is $\rho g(\eta)$ only when the rounding theorem's assumptions apply to the generated fractional solution.

    The proof obligation is therefore not a generic variance inequality. A variance bound by itself does not control expected cost, and prediction-dependent sampling does not automatically preserve feasibility or robustness. P3 applies when the literature supplies both an error-sensitive fractional guarantee and a compatible online rounding result. The rounding factor may dominate the final ratio and may prevent exact consistency even when the fractional prediction is perfect.

    \item \emph{\textbf{P4}: Prediction-Access, Portfolio, and Switching Reductions.}
    A portfolio result must connect predictor choice to the online problem's actual cost. For losses in $[0,1]$, a standard Hedge guarantee for weights $w_t$ in the probability simplex has the form
    \begin{equation}
       \sum_t\sum_{j=1}^{M}w_{j,t}\ell_t(\hat y_{j,t})
       \leq \min_j\sum_t\ell_t(\hat y_{j,t})+O(\sqrt{T\log M}),
    \end{equation}
    where $M$ is the number of predictors. The left side is also the expected loss when an expert is sampled from $w_t$. If $\bar y_t=\sum_jw_{j,t}\hat y_{j,t}$ is feasible and $\ell_t$ is convex, Jensen's inequality additionally bounds $\sum_t\ell_t(\bar y_t)$ by this weighted loss. Without convexity, the latter form is not implied. The surrogate-loss guarantee does not by itself imply a competitive ratio: one also needs a reduction from $\ell_t$ to algorithm cost, a compatible state or migration rule, and a benchmark whose scale absorbs any additive regret.

    Anand et al.~\cite{anand2022multiple} give a potential-based framework for multiple predictions in online covering, while Dinitz et al.~\cite{dinitz2022portfolios} provide problem-specific portfolio algorithms and Antoniadis et al.~\cite{antoniadis2023mixing} analyze combinations for metrical task systems against static or switching-aware benchmarks. Cosa and Eli\'a\v{s}~\cite{cosa2025bandit} restrict each round to one queried MTS predictor. Writing $\mathrm{OPT}_{\leq 0}$ for the cost of the best supplied heuristic, their main regret upper bound is $O(\!\mathrm{OPT}_{\leq 0}^{2/3})$ when the metric diameter, number of predictors, and $m$-delayed-access parameter are treated as constants; the lower bound is $\widetilde{\Omega}(\!\mathrm{OPT}_{\leq 0}^{2/3})$, i.e., tight only up to logarithmic factors. Nguyen~\cite{nguyen2026covering} instead compares online covering decisions with a time-varying linear combination of experts. Sparse-query and costly-prediction models also belong to P4 when they control whether or when a predictor is accessed~\cite{drygala2023costly,im2022parsimonious,sadek2024reduced}. These results justify P4 only under their stated feedback, state, query, and comparator models. Robustness follows when the portfolio contains or is coupled to a robust algorithm and the combination theorem preserves its bound; adversarial regret against a pool of non-robust experts is not sufficient.

    \item \emph{\textbf{P5}: Algorithms with Distributional Predictions.}
    Here the prediction itself is a probability distribution rather than a point. Dinitz et al.~\cite{dinitz2024binarysearch} interleave distribution-guided median probes with classical binary search and obtain expected query complexity $O(H(p)+\max\{\log\eta,0\})$, where $p$ is the true access distribution and $\eta$ is its earth mover distance from the prediction. The truncation is essential because $\eta$ may be smaller than one or zero. Angelopoulos et al.~\cite{angelopoulos2024distributional} design contract schedules that use distributional advice while retaining a problem-specific worst-case guarantee.

    These examples do not instantiate a general coherent-risk theorem. A risk-sensitive objective may separately use expectation, CVaR, an entropic risk measure, or a quantile, but these objects have different axioms: entropic risk is convex rather than coherent in general, and a quantile (VaR) is not coherent in general. A Wasserstein bound additionally requires regularity of the loss and distributions, and a bound for a fixed decision does not automatically bound the excess cost of the optimizer selected under a misspecified distribution. P5 therefore records distributional prediction models and their problem-specific analyses, not a universal Lipschitz principle.

\end{enumerate}

\paragraph{Axis B: theorem-level tightness status} E1 is attached when an upper bound $f(\eta)$ is accompanied by a lower bound with the same asymptotic dependence under the same prediction model, error measure, adversary, benchmark, and computational assumptions~\cite{wei2020optimal,antoniadis2023metric,dinitz2024binarysearch,balkanski2024randomized}. Otherwise the result remains at E0 in this survey, which means only that a matching result is not paired here. Logarithmic gaps, class-restricted lower bounds, or changes of benchmark remain E0 under this strict rule. It is not a claim that no matching result exists, nor is E1 a general ranking of empirical validity, breadth, or practical importance. Neither label transfers between incomparable models.

We use ``tight'' below only for explicit matching upper and lower bounds under the same model, not merely for an apparently favorable empirical frontier.

\begin{table}[t!]
\centering
\nomakegapedcells
\setlength{\tabcolsep}{3pt}
\caption{Selected theorem records illustrating the two-axis coding. Each status applies only to the stated guarantee. N/A marks adjacent empirical evidence outside Axis B, not a third evidence level.}
\label{tab:laa_taxonomy}
\fontsize{6.5}{7.25}\selectfont
\begin{tabularx}{\textwidth}{@{}>{\raggedright\arraybackslash}p{2.26cm}>{\centering\arraybackslash}p{0.92cm}YY>{\centering\arraybackslash}p{0.72cm}@{}}
\toprule
\textbf{Theorem record} & \textbf{Axis A} & \textbf{Eligible upper-bound claim} & \textbf{Lower-bound pairing in the same model} & \textbf{Axis B} \\
\midrule
Deterministic ski rental~\cite{purohit2018improving,wei2020optimal}
& P1 & Guarantee curve $(1+\lambda,1+1/\lambda)$ for $\lambda\in(0,1]$ & Asymptotically matching impossibility frontier for the same deterministic model; finite-$B$ rounding remains & E1 \\
\addlinespace[1pt]
Caching with next-arrival predictions~\cite{rohatgi2020near}
& P1 & Error-dependent competitive guarantee under the paper's arrival-error metric & Upper and lower bounds are near-optimal but leave a functional and logarithmic gap under the strict rule & E0 \\
Binary search with distributional predictions~\cite{dinitz2024binarysearch}
& P5 & $O(H(p)+\max\{\log\eta,0\})$ expected queries & Matching dependence under the paper's distributional model & E1 \\
\addlinespace[1pt]
MTS with bandit predictor access~\cite{cosa2025bandit}
& P4 & $O(\mathrm{OPT}_{\leq0}^{2/3})$ regret when auxiliary parameters are constants & $\widetilde\Omega(\mathrm{OPT}_{\leq0}^{2/3})$ leaves logarithmic factors & E0 \\
\addlinespace[1pt]
Stochastic online bidding~\cite{angelopoulos2025bidding}
& P1, P5 & Distributional deterministic: tight Pareto trade-off; randomized point oracle: class-specific upper bounds & The distributional deterministic frontier is matched; the randomized results do not form one blanket match & P5: E1; P1: E0 \\
\addlinespace[1pt]
Randomized strategic facility location~\cite{balkanski2024randomized}
& outside P1--P5 & The line: truthful-in-expectation frontier; higher-dimensional settings: model-specific upper bounds & The line has a matching impossibility result; the other settings do not share one matched frontier & line: E1; others: E0 \\
\addlinespace[1pt]
ALEX adaptive learned index~\cite{ding2020alex}
& adjacent & Workload-dependent empirical update and lookup performance & No eligible consistency, robustness, or error-dependent theorem & N/A \\
\bottomrule
\end{tabularx}
\end{table}

\subsection{Reading the Taxonomy: Where the Gaps Are}
\label{subsec:gaps}

Table~\ref{tab:laa_taxonomy} reports representative positive assignments rather than a mechanism $\times$ domain grid with empty cells. Unlisted combinations remain search and modeling questions, not asserted research gaps. For example, applying P2 to a learned data structure would require an online relaxation and dual invariant that deterministic gapped-array layouts do not presently expose. Applying P5 to a graph problem would require a tractable distributional object and a stability argument linking its misspecification to graph cost. In mechanism design, any P3 or P5 construction must preserve incentive constraints in addition to feasibility.

Other combinations expose accounting questions. A learned-index portfolio would need to charge model-selection latency, rebuild cost, and state migration before regret could imply query latency. Data-driven selection of $\lambda$ must distinguish an offline validation rule from an online rule that observes only causal feedback. These are questions suggested by the included models; the survey makes no claim that the corresponding literatures are empty.

Finally, tight upper and lower bounds are concentrated in a smaller set of well-specified models, including ski rental, distributional binary search, and selected prediction or advice models for caching and strategic facility location. Graph, data-structure, and mechanism-design results often optimize different quantities, e.g., competitive ratio, approximation, query complexity, or running time, so their guarantees should not be ranked on one maturity scale without preserving those distinctions.

\section{Problem Domains}
\label{sec:domains}

Table~\ref{tab:domain_overview} consolidates the five domains while preserving the objective in each row. Evidence status is intentionally omitted from these grouped rows because one row may cite several incomparable theorems. The selected records in Table~\ref{tab:laa_taxonomy} assign E0 or E1 only to a specific stated guarantee.

We now examine five problem domains and one cross-cutting methodological strand. Each domain is presented in the same three parts: the classical formulation, the LAA-theoretic extension, and the representative problem classes. A short assessment of what remains open in that domain follows. The uniformity is deliberate. It lets a reader who cares about one domain read a single subsection, and it lets a reader comparing domains see which structural features drive the differences in guarantee.

\begin{table}[t!]
\nomakegapedcells
\centering
\setlength{\tabcolsep}{3pt}
\caption{Consolidated view of representative problem domains. Mechanisms can fall outside P1--P5. Grouped rows summarize objectives and guarantees but do not receive a single evidence label.}
\label{tab:domain_overview}
\fontsize{6.5}{7.25}\selectfont
\begin{tabular}{@{}
    >{\raggedright\arraybackslash}p{1.56cm}
    >{\raggedright\arraybackslash}p{2.67cm}
    >{\raggedright\arraybackslash}p{2.55cm}
    >{\centering\arraybackslash}p{1.23cm}
    >{\raggedright\arraybackslash}p{4.81cm}
@{}}
\toprule
\textbf{Domain} & \textbf{Problem class} & \textbf{Core tension} & \textbf{Mechanism by cited result} & \textbf{Representative guarantee} \\
\midrule
\multirow{4}{=}{\emph{Online optimization} (\S\ref{subsec:online_opt})}
& One-dimensional and multistate stopping (ski rental, Bahncard, dynamic power management)
& Irrevocable timing vs.\ unknown horizon
& P1 for ski rental; problem-specific extensions
& Ski rental has the asymptotically tight curve $(1{+}\lambda,\,1{+}1/\lambda)$; extensions use different guarantees~\cite{purohit2018improving,wei2020optimal,zhao2024bahncard,antoniadis2021power} \\
\addlinespace[1pt]
& Metrical task systems
& Movement cost vs.\ service cost
& P1 robustification; P4 predictor mixing
& Robust black-box combinations and switching-aware mixtures of predictors~\cite{antoniadis2020metric,antoniadis2023mixing} \\
\addlinespace[1pt]
& Online convex optimization
& Adversarial regret vs.\ temporal regularity
& optimistic online learning
& Classical optimistic methods give regret controlled by $\sqrt{\sum_t \|g_t-\hat g_t\|_*^2}$ under the stated geometry; imperfect-hint refinements use more specific quantities~\cite{rakhlin2013predictable,bhaskara2020hints} \\
\addlinespace[1pt]
& Scheduling, load balancing, online allocation
& Capacity uncertainty vs.\ irrevocable assignment
& P3 for Lattanzi et al.; other mechanisms differ
& Bounds use distinct weight, size, speed, or learned-parameter errors and objectives~\cite{lattanzi2020online,cohen2023allocation,cohen2025allocation,lindermayr2023speed,zhao2026smoothness} \\
\midrule
\multirow{3}{=}{\emph{Caching} (\S\ref{subsec:caching})}
& Prediction-aware paging
& $1$-consistency without prohibitive overhead
& P1
& Error-dependent trade-offs and reduced-prediction variants~\cite{rohatgi2020near,sadek2024reduced} \\
\addlinespace[1pt]
& Randomized robustification
& Preserving $1$-consistency while capping the worst case
& problem-specific robustifier
& $1$-consistent and $(2H_{k-1}{+}2)$-robust with $O(1)$ additional algorithmic time per request in the cited model~\cite{chen2025robustifying} \\
\addlinespace[1pt]
& Weighted and succinct variants
& Non-uniform costs; coarse predictions
& Structural analysis; one-bit advice
& Bounds depending on weight classes and prediction error; consistency, robustness, and smoothness with one-bit advice~\cite{jiang2020weighted,bansal2022weighted,antoniadis2023succinct} \\
\midrule
\multirow{3}{=}{\emph{Data structures} (\S\ref{subsec:data_structures})}
& Indexes with worst-case error bounds
& Query time vs.\ approximation error and data regularity
& problem-specific segmentation
& $\Theta(m)$ space for the minimum number $m$ of $\epsilon$-approximate segments; $m\le n/(2\epsilon)$ for integer $\epsilon\geq1$~\cite{ferragina2020pgm} \\
\addlinespace[1pt]
& Adaptive layouts and model selection
& Static models may degrade under workload drift
& adjacent systems
& Dynamic layouts and workload-dependent empirical performance~\cite{ding2020alex,wongkham2022updatable} \\
\addlinespace[1pt]
& Filters and sketches
& False-positive rate vs.\ space
& adjacent systems
& Distribution-dependent false-positive/space trade-offs with a backup filter~\cite{mitzenmacher2018sandwiching,hsu2019frequency} \\
\midrule
\multirow{3}{=}{\emph{Graphs and networks} (\S\ref{subsec:graph})}
& Incremental shortest paths
& Running time vs.\ error in a predicted update sequence
& prediction-guided dynamic update
& $(1+\epsilon)$-approximate distances with update time depending on sequence error~\cite{mccauley2025sssp} \\
\addlinespace[1pt]
& Matching, flow, structure
& Dual initialization sensitivity
& P2 for learned duals; problem-specific advice otherwise
& Runtime warm starts from learned optimization solutions; error-dependent online guarantees~\cite{dinitz2021duals,sakaue2022dca,lavastida2021instancerobust,davies2023flows} \\
\addlinespace[1pt]
& Dynamic maintenance
& Error in predicted update order
& sequence prediction
& Prediction-guided incremental and dynamic maintenance under distinct error models~\cite{mccauley2023listlabeling,mccauley2024topological,agarwal2024submodular} \\
\midrule
\multirow{3}{=}{\emph{Mechanism design} (\S\ref{subsec:mech_design})}
& Pricing and revenue
& Prediction error vs.\ revenue guarantees
& problem-specific
& Error-dependent revenue guarantees for learned reserve information~\cite{medina2017revenue} \\
\addlinespace[1pt]
& Online selection and matching
& Irrevocable decisions; arrival-model ambiguity
& problem-specific
& Advice-sensitive and fairness-aware bounds under distinct secretary, chosen-order, and matching models~\cite{dutting2021secretaries,jin2022bipartite,balkanski2024fair,karisani2026secretary} \\
\addlinespace[1pt]
& Facility location; general allocation
& Incentives, welfare, and revenue under error
& Facility: problem-specific; allocation: weakest competitor
& Facility-location guarantees use stated prediction models~\cite{agrawal2022facility,balkanski2024randomized}; Prasad et al. give bicriteria welfare/revenue guarantees from side information~\cite{prasad2023sideinfo} \\
\bottomrule
\end{tabular}
\end{table}

\subsection{Online Optimization}
\label{subsec:online_opt}

\subsubsection{Classical Formulation}
Online optimization models sequential decision-making under uncertainty. An online algorithm receives a request sequence $\sigma = (\sigma_1, \ldots, \sigma_T)$ and chooses $x_t$ using only the revealed history. Feasibility, state transitions, and movement costs may couple decisions across time. For a randomized cost-minimization algorithm, the competitive ratio is
\begin{equation}
    \mathrm{CR}(\mathcal{A}) = \sup_\sigma
    \frac{\mathbb E[\mathrm{cost}_{\mathcal A}(\sigma)]}{\mathrm{OPT}(\sigma)},
\end{equation}
where $\mathrm{OPT}$ obeys the same feasibility and state constraints but knows the full sequence. An additive constant is included when required by the problem's standard definition. Ski rental has deterministic optimum ratio $2$ and randomized optimum ratio $e/(e-1)\approx1.58$~\cite{karlin1988snoopy,karlin1994competitive}.

\subsubsection{LAA Extension}
LAA adds a prediction and states separately what happens when it is exact and when it is arbitrary. Robustness is a finite prediction-independent ratio, not necessarily the optimal prediction-free ratio. For deterministic ski rental the standard guarantee trade-off is $(1+\lambda,1+1/\lambda)$ for $\lambda\in(0,1]$, and its impossibility frontier is asymptotically matching as $B$ grows~\cite{purohit2018improving,wei2020optimal}. Consequently, approaching exact $1$-consistency sends this deterministic robustness bound to infinity; the randomized frontier is different and should not be conflated with the deterministic formula.

\subsubsection{Representative Problem Classes}

\paragraph{One-Dimensional Stopping Problems}
The simplest online problems involve a single irrevocable decision whose optimal timing depends on an unknown scalar. In deterministic ski rental, the prediction determines whether the purchase threshold is $\lceil\lambda B\rceil$ or $\lceil B/\lambda\rceil$; $\lambda$ is fixed before the realized error is known. The classical randomized benchmark is $e/(e-1)$-competitive, not $e$-competitive. Deterministic and randomized trade-off frontiers are analyzed separately~\cite{purohit2018improving,gollapudi2019rentorbuy,wei2020optimal}.

This stopping framework generalizes to problems with richer cost structures. The \emph{Bahncard problem} models travelers choosing between purchasing time-limited discount cards or paying full fare, introducing expiration and residual-value effects absent in ski rental. Zhao et al.~\cite{zhao2024bahncard} analyze consistency--robustness trade-offs for this model. Dynamic power management with multiple power-saving states is another generalization: the prediction concerns the length of an idle period, and the guarantee degrades with its error~\cite{antoniadis2021power}. Contract scheduling changes the objective and admits both point and distributional advice~\cite{angelopoulos2023contract,angelopoulos2024distributional}. Online bidding admits several stochastic formulations that must be separated~\cite{angelopoulos2025bidding}. The deterministic strategy under distributional predictions has a matched Pareto frontier. For a randomized algorithm receiving a single-valued oracle, the paper instead gives upper bounds for a parameterized strategy class and a distinct general lower bound; these do not establish one blanket matching frontier. Tail-risk formulations evaluate a quantile of the competitive ratio rather than its expectation~\cite{dinitz2024tailrisk}. General analyses also show that consistency, robustness, smoothness, and stochastic performance can impose distinct, interacting trade-offs~\cite{benomar2025tradeoffs}.

\paragraph{Stateful Decision-Making on Metric Spaces}
Unlike stopping problems, \emph{metrical task systems} (MTS) require maintaining and transitioning among states in a metric space $(X, d)$ while serving requests. A central trade-off is between \emph{movement cost} (transitioning between states) and \emph{service cost} (serving a request from the current state). On arbitrary $n$-point metrics, deterministic classical guarantees scale linearly with $n$; randomized guarantees obey different bounds, so the adversary and randomization model must be stated explicitly~\cite{antoniadis2020metric}.

Prediction models for metrical task systems typically provide recommended states, actions, or several predictors rather than generic ``future locations''. Antoniadis et al.~\cite{antoniadis2020metric} give robust black-box combinations for online metric algorithms. Their later work~\cite{antoniadis2023mixing} studies dynamic combinations of multiple predictors, obtaining an $O(\ell^2)$ ratio against an unconstrained combination of $\ell$ predictors and a $(1+\epsilon)$ guarantee against a benchmark with a slightly constrained number of switches. With bandit access, only one predictor can be queried per round and the unqueried movement-dependent costs are not observed. Cosa and Eli\'a\v{s}~\cite{cosa2025bandit} obtain $O(\!\mathrm{OPT}_{\leq0}^{2/3})$ regret relative to the best supplied heuristic when the metric diameter, number of predictors, and $m$-delayed-access parameter are constants. Their $\widetilde\Omega(\!\mathrm{OPT}_{\leq0}^{2/3})$ lower bound leaves logarithmic factors, so Table~\ref{tab:laa_taxonomy} records E0 under the strict same-asymptotics rule. These are problem-aware combination results, not the generic predicted-dual update of P2 or a universal $O(\log\eta)$ frontier.

\paragraph{Convex Optimization with Predictable Structure}
\emph{Online convex optimization} (OCO) generalizes many online problems by allowing convex loss functions $f_t$ revealed after each decision. Under the usual bounded-domain and bounded-gradient assumptions, classical OCO achieves $O(\sqrt{T})$ regret against adversarial sequences, but this horizon-only rate can be pessimistic when losses exhibit temporal regularity.

When gradient or loss predictions are available, classical optimistic OMD/FTRL can replace a horizon-only bound by one depending on cumulative hint error, such as a square root of $\sum_t\|g_t-\hat g_t\|_*^2$ under the theorem's geometry and regularity assumptions~\cite{rakhlin2013predictable}. Bhaskara et al.~\cite{bhaskara2020hints} study imperfect hints in a constrained Hilbert-space setting, obtaining refined upper bounds and near-matching lower bounds for their own error quantities; their result should not be cited as the source of every optimistic-gradient formula. These are point-prediction regret guarantees, not P5 distributional optimization, and perfect hints do not imply constant regret without checking the precise bound and comparator assumptions.

\paragraph{Resource Allocation and Scheduling}
Problems such as \emph{load balancing}, \emph{bin packing}, and \emph{scheduling} involve assigning jobs to resources with limited capacity. Classical worst-case algorithms do not use predicted workload structure, whereas a prediction-driven policy without an independent guarantee can incur high cost when its predictions are inaccurate.

In these models, LAA results use problem-specific prediction objects and analyses. Learned weights can guide a fractional restricted-assignment schedule whose $O(\log\eta)$ guarantee is followed by an online rounding factor~\cite{lattanzi2020online}. Im et al.~\cite{im2021nonclairvoyant} analyze total-completion-time scheduling with predicted job sizes under a tailored error measure. Cohen and Panigrahi study broader online allocation frameworks with learned per-agent parameters or weights, first for divisible allocation and later for a more general option-based model~\cite{cohen2023allocation,cohen2025allocation}; these formulations and objectives should not be collapsed into one scheduling theorem. Other work predicts machine speeds, job sizes, rankings, item frequencies, or deadlines and uses different distortion measures~\cite{lindermayr2023speed,balkanski2023speed,balkanski2023energy,azar2021flowtime,azar2022distortion,im2021knapsack,ahmadian2023loadbalancing,dinitz2026jrp}. Zhao and Zomaya~\cite{zhao2026smoothness} study a refined smoothness definition for nonclairvoyant scheduling; its bounds are tied to their information restriction and are not interchangeable with Definition~\ref{def:smoothness}. For nonclairvoyant joint replenishment with deadlines, Dinitz et al.~\cite{dinitz2026jrp} prove $O(\min\{\eta^{1/3}\log^{2/3}n,\sqrt{\eta},\sqrt n\})$ and an $\Omega(\eta^{1/3})$ lower bound only for a specified class of deterministic algorithms. The logarithmic gap and class restriction make this near-tight evidence, not E1. For online packing with concave objectives, Grigorescu et al.~\cite{grigorescu2025packing} switch between advice and a classical algorithm; for convex covering, they extend prediction-guided primal--dual methods. Another line combines multiple experts against a time-varying covering benchmark~\cite{nguyen2026covering}. These results should therefore be compared only after matching the prediction object, objective, feedback, and error metric.

\paragraph*{Open gaps in this domain}
Two structural questions stand out in the surveyed corpus. The ski-rental model has an asymptotically tight consistency--robustness frontier, whereas many higher-dimensional resource-allocation and scheduling results summarized here are upper bounds under distinct prediction models. Separately, trust-parameterized results choose $\lambda$ before the relevant error is known. Portfolio methods, formal robustifiers, and fallbacks can adapt in particular models, but their regret, state-coupling, and overhead costs must be included in the analysis.

\subsection{Caching and Paging}
\label{subsec:caching}

\subsubsection{Classical Formulation}
The caching (or paging) problem is a cornerstone of online algorithm design, where a decision-maker must serve a sequence of $T$ page requests with a cache of limited size $k$, aiming to minimize cache misses. In the deterministic setting, the competitive ratio is lower bounded by $k$, a barrier matched by LRU and FIFO~\cite{sleator1985amortized}. Against an oblivious adversary, randomized paging has lower bound $H_k$, and the Marker algorithm is $(2H_k-1)$-competitive~\cite{fiat1991competitive}. Partition and Equitable achieve the optimal $H_k$ ratio, with different computational costs~\cite{mcgeoch1991strongly,achlioptas2000competitive}. The offline optimum, Belady's rule, evicts the page whose next request is furthest in the future~\cite{belady1966study}.

\subsubsection{LAA Extension}
Learning-augmented caching extends the classical model by incorporating machine-learned predictions to guide eviction decisions. Predictions typically take one of two forms:
\begin{itemize}
    \item \emph{Next Request Time (NRT)}: When page $p$ is requested, the predictor supplies a timestamp $\hat{t}_p$ for its next request; the algorithm retains that prediction while $p$ remains cached. Accurate NRT predictions approximate the information used by Belady's oracle.
    \item \emph{Binary or Action Predictions}: A binary label indicating whether a page will be requested again within a certain window, or a direct eviction recommendation. These are often easier to learn but provide coarser information.
\end{itemize}
An important question is whether \emph{$1$-consistency}, i.e., matching Belady's oracle when predictions are perfect, can coexist with robustness and low algorithmic overhead. Blindly following NRT predictions can be $1$-consistent but have unbounded worst-case performance under adversarial errors. Reverting entirely to a classical algorithm ignores the prediction~\cite{lykouris2018competitive,lykouris2021competitive,rohatgi2020near}. Later constructions analyze different combinations of consistency, robustness, prediction use, and computational overhead~\cite{wei2020better}.

\subsubsection{Representative Problem Classes}

\paragraph{Deterministic Switching Approaches}
Combining a prediction-following paging algorithm with a classical algorithm is difficult because their cache states need not agree and moving between states can itself cause misses. A valid switching theorem must charge that coupling cost; one cannot set $\lambda=\lambda(\eta)$ unless the relevant error has already been revealed. Separate work studies how to retain guarantees while querying the predictor less often~\cite{im2022parsimonious,sadek2024reduced}. Query complexity, running time, and competitive ratio are distinct resources and should be reported separately.

\paragraph{Randomized Robustification}
For a relaxed-Belady-following (RB-following) caching algorithm $A$, \textsc{Guard}$\&A$ preserves $1$-consistency and achieves $(2H_{k-1}+2)$-robustness while maintaining the base algorithm's asymptotic running time~\cite{chen2025robustifying}. The paper establishes RB-following instantiations for BlindOracle, LRB, and Parrot, so the corresponding guarded variants receive the stated guarantees in its model. The result relies on paging-specific phase and protection rules; it is not a domain-independent wrapper, and its overhead accounting excludes predictor inference.

\paragraph{Structured and Weighted Extensions}
Beyond uniform-cost paging, weighted paging assigns known heterogeneous eviction costs to pages. The prediction studied by Bansal et al.~\cite{bansal2022weighted} concerns each page's next arrival, not its weight. Their deterministic and randomized guarantees depend on the number of distinct weight classes and degrade with prediction error toward the classical $k$ and $O(\log k)$ regimes; the analysis uses structural properties of Belady's algorithm and prediction-aware potential functions. Jiang et al.~\cite{jiang2020weighted} instead compare PRP, strong-lookahead, SPRP, Static, and Follow models through model-specific algorithms, error measures, and lower bounds. Their contribution is not classified here as a prediction-guided primal--dual update.

Paging with succinct predictions instead limits the advice accompanying each request to one bit~\cite{antoniadis2023succinct}. The paper proves consistency, robustness, and smoothness guarantees together with lower bounds under that one-bit model. It does not assume a Markov chain or aggregate Markov predictors of different orders.

\paragraph*{Open gaps in this domain}
Three questions remain. First, aggregate per-request error can hide whether error is concentrated on frequently requested objects. Second, the fixed-cache results reviewed here do not model co-located services whose available capacity varies over time. Third, additional algorithmic running time excludes predictor-call cost. Query-budget work counts how often a predictor is used~\cite{im2022parsimonious,sadek2024reduced}, but converting inference latency or energy into the caching objective requires a different model.

\subsection{Learned Data Structures}
\label{subsec:data_structures}

\subsubsection{Classical Formulation}
Learned data structures study how data regularity can improve practical search, indexing, and filtering, with formal guarantees or empirical evidence depending on the design. Given a sorted dataset $\{(x_i, y_i)\}_{i=1}^n$, a comparison search locates a key in $O(\log n)$ comparisons, while hash-based alternatives can provide expected $O(1)$ lookup without ordered range-query semantics~\cite{richter2015seven}. Learned indexes use a fitted model to narrow the search interval; any improvement is conditional on data regularity and must include the model and correction structure in the space and time accounting.

\subsubsection{LAA Extension}
A learned index uses a model $f_\theta:x\mapsto\widehat{\mathrm{pos}}(x)$ to narrow the search range. When a certified maximum position error $\epsilon_\theta$ is available, a local binary search costs $O(\log(\epsilon_\theta+1))$; the $+1$ covers exact predictions. Total lookup cost also includes model space and inference time~\cite{kraska2018case}. These quantities form a workload- and architecture-dependent trade-off, not a universal Pareto frontier.

Unlike online algorithms with irrevocable decisions, learned indexes primarily improve query or update time. A certified bound $\epsilon_\theta\le\bar\epsilon$ is conditional on the fitted structure; it is not robustness to arbitrary prediction error. A genuine worst-case fallback searches a larger range or the full index, typically recovering a classical bound such as $O(\log n)$. We therefore treat learned indexes as an adjacent prediction-sensitive literature and keep their running-time guarantees distinct from competitive ratios.

\subsubsection{Representative Problem Classes}

\paragraph{Indexes with Worst-Case Error Guarantees}
Some early learned-index designs emphasized average-case performance. Latency-sensitive use additionally requires a certified or monitored error bound and a correction or fallback procedure. One formal objective is to ensure $\epsilon_\theta \leq \bar{\epsilon}$ uniformly while controlling space.

The PGM-index computes an optimal piecewise-linear approximation for a fixed array and integer error tolerance $\epsilon\geq1$: if $m$ is the minimum number of $\epsilon$-approximate segments, the recursive index uses $\Theta(m)$ space, and $m\le n/(2\epsilon)$ is a worst-case upper bound~\cite{ferragina2020pgm}. The minimum-segment result is different from claiming that $\Theta(n/\epsilon)$ space is necessary for every dataset. The same paper also gives a fully dynamic variant with insertion and deletion bounds. FITing-tree uses linear models and a fixed integer error parameter $\epsilon$ indexed by a B-tree~\cite{galakatos2019fiting}; it is not a distributional-risk method and does not adapt a local tolerance through P5.

\paragraph{Adaptive Layouts and Model Selection}
Static learned models may degrade when access patterns or data distributions drift over time. The challenge is adapting the index structure while accounting for retraining and data-movement costs.

ALEX is an updatable learned index using learned linear models, gapped-array data nodes, cost models, and adaptive splitting or expansion policies~\cite{ding2020alex}. Gapped arrays are a deterministic layout technique rather than randomized rounding, and ALEX does not prove a Hedge-style guarantee over linear and polynomial experts. Wongkham et al.~\cite{wongkham2022updatable} benchmark updatable learned indexes under multiple workloads; they do not propose an ensemble that routes each query among specialized indexes. These systems provide empirical adaptability, not a consistency--robustness theorem under distribution shift.

\paragraph{Filters and Emerging Representations}
Beyond ordered indexes, learned techniques have been applied to approximate membership queries and frequency estimation, with guarantees that differ from those of learned indexes.

Learned Bloom filters combine a classifier with a backup Bloom filter for keys the classifier might falsely reject~\cite{mitzenmacher2018sandwiching,mitzenmacher2021algorithms}. Their false-positive rate depends on the classifier, the negative-query distribution, and the memory allocated to the backup. Even a classifier with perfect precision does not by itself force the total false-positive rate to zero, because the backup Bloom filter can return false positives. Nor is ``no worse than a classical Bloom filter'' automatic for every allocation; it must be established under the paper's distributional and space assumptions.

\paragraph*{Open gaps in this domain}
Static learned models expose one set of guarantees, while PGM and ALEX show that updates can also be supported with formal or empirical costs~\cite{ferragina2020pgm,ding2020alex}. Benchmark studies find that performance varies across insertion patterns and datasets~\cite{wongkham2022updatable}. Open questions concern adversarial update sequences, rebuild policy, and end-to-end accounting for retraining and memory movement. A model portfolio would additionally need a state-migration analysis before per-query selection regret could imply an index-level guarantee.

\subsection{Graph Algorithms and Networking}
\label{subsec:graph}

\subsubsection{Classical Formulation}
Graph optimization encompasses problems such as shortest paths, matchings, cuts, and spanning trees, with applications in transportation, communication networks, and social network analysis. For an edge-weighted minimization problem, a common abstraction is $\min_{S \in \mathcal{F}} \sum_{e \in S} w(e)$ over a problem-specific feasible family $\mathcal{F}$; maximization and flow formulations use different objectives. Because feasibility and optimality can depend nonlocally on the graph, a local prediction error may alter a global solution. Prediction-augmented work therefore asks, model by model, whether a declared hint can improve running time or online performance while retaining a stated worst-case guarantee.

\subsubsection{LAA Extension}
Prediction-augmented graph work studies several distinct objectives. In offline matching and flow, a predicted dual solution can improve running time while the output remains exact. In incremental graph problems, a predicted update sequence can warm-start a data structure and reduce total update time. In online matching, flow allocation, or load balancing, advice can improve competitive performance. ``Consistency'' must therefore name the improved quantity, e.g., running time, approximation, or online cost, rather than uniformly mean recovery of an exact optimal subgraph. Prediction errors likewise range from dual distance to inversions or edit distance between update sequences and are not interchangeable.

\subsubsection{Representative Problem Classes}

\paragraph{Path and Routing Problems}
Shortest-path problems exhibit global dependencies, but the representative incremental result here predicts update order rather than a shortest path or hidden edge weights.

McCauley et al.~\cite{mccauley2025sssp} consider incremental approximate single-source shortest paths. Before edges arrive, the data structure receives a prediction of the entire edge-insertion sequence and uses it to warm-start its state. The result maintains $(1+\epsilon)$-approximate distances with total running time that degrades with sequence-prediction error and recovers a worst-case bound up to logarithmic factors. It is not a switch between a predicted path and Dijkstra, and its guarantee concerns dynamic-data-structure running time. In a separate offline model, Polak and Schmidt~\cite{polak2026apsp} predict detour-certificate vertex sets for all-pairs shortest paths and obtain running time $O(n^{2.83}+\eta n)$, where $\eta$ counts pairs whose prediction does not suffice to compute and certify the shortest-path distance. This conditional speedup is neither a dynamic update guarantee nor a consistency--robustness ratio.

\paragraph{Matching, Flow, and Structural Optimization}
LP duality supplies certificates for several matching and flow formulations, making predicted dual information a plausible source of computational speedups. The challenge is translating an inaccurate structural prediction into a valid initialization or certificate without violating feasibility.

P2 takes a distinctive form here. Dinitz et al.~\cite{dinitz2021duals} use predicted dual information to accelerate matching, with running time that depends on distance from suitable optimal duals. Related prediction-sensitive ideas appear in flows~\cite{davies2023flows} and online fractional matching~\cite{choo2025fractional}, but their prediction objects and guarantees differ. In the offline matching and flow results, the improved quantity is running time rather than solution quality. Any use of a predicted dual must also respect or repair feasibility under the conditions of the specific algorithm; there is no single warm-start theorem covering all the cited online and offline problems.

Lavastida et al.~\cite{lavastida2021instancerobust} instead design learnable, instance-robust predictions for online matching, flow allocation, and load balancing. Their prediction objects and reductions are problem specific; the paper does not establish a generic prediction-biased edge-sampling paradigm.

Predictions of imperfect and unknown quality have also been analyzed for online bipartite matching~\cite{choo2024imperfect}; predicted structure or sequences have been used for minimum cut, online Steiner tree, and acknowledgement problems~\cite{moseley2025mincut,moseley2022steiner,im2023acknowledgement}. These works do not share a single mechanism. Discarding a prediction and running a classical method is safe only when its additional computation, prior irreversible decisions, and state conversion are included in the model.

\paragraph{Dynamic and Distributional Settings}
Graph updates and uncertain future demands require models that specify when predictions are issued, how sequence error is measured, and whether the update process may react to the algorithm. A static prediction theorem is not automatically invalid under change, but it may omit update and feedback costs.

Prediction-guided data structures maintain solutions under update sequences for problems including list labeling and topological ordering~\cite{mccauley2023listlabeling,mccauley2024topological}. Dong et al.~\cite{dong2025correlation} instead study arbitrary-order streaming correlation clustering with predicted pairwise distances. Their guarantees couple approximation and space, e.g., a better-than-$3$ approximation on complete graphs under good prediction quality with $\widetilde O(n)$ space, rather than a dynamic-update competitive ratio. Dynamic submodular maximization provides another objective: predicted insertion and deletion times yield amortized update time polynomial in logarithms of the timing error and model parameters while maintaining a $(1/2-\epsilon)$ approximation in expectation~\cite{agarwal2024submodular}. Classical streaming and dynamic bounds vary by problem, approximation, adversary, and update model, so these statements are not interchangeable. Multi-predictor guarantees also require a state-coupling theorem rather than generic expert routing.

Classical route-planning algorithms~\cite{bast2016route} and empirical learned routing are adjacent to, but distinct from, a formal P5 treatment of predicted traffic distributions. The corpus summarized here does not supply an end-to-end consistency--robustness theorem for that formulation. Such a result would require a precise routing model, prediction object, distributional error measure, and benchmark.

\paragraph*{Open gaps in this domain}
Among the representative graph results in Table~\ref{tab:domain_overview}, matching lower bounds are sparse, so an upper bound should not be read as a complete characterization unless its cited model supplies one. Structural predictions also raise a question that scalar point predictions avoid: what does it mean for a predicted \emph{combinatorial object} to be slightly wrong? Symmetric difference and edit distance are both used, but they are not generally comparable and neither is universally the right error measure. For learned routing, a useful next step is therefore to state one concrete model in which accurate traffic advice improves the objective while adversarial advice retains a bounded approximation guarantee.

\subsection{Mechanism Design and Algorithmic Game Theory}
\label{subsec:mech_design}

\subsubsection{Classical Formulation}
Mechanism design studies allocations and payments when agents hold private information. Objectives include welfare, revenue, cost, and makespan, and incentive notions must be distinguished: dominant-strategy incentive compatibility (DSIC) requires truthful reporting for every report profile, whereas Bayesian incentive compatibility (BIC) is defined in expectation over other agents' types. Myerson's result concerns a particular Bayesian single-parameter auction model~\cite{myerson1981optimal}; it is not a template for all mechanism-design problems. For maximization objectives, approximation is reported as $\mathrm{OPT}/\mathbb E[\mathrm{ALG}]$ or an equivalent factor.

\subsubsection{LAA Extension}
Learning-augmented mechanism design uses predictions of values, processing times, locations, or side information while preserving a specified incentive notion. The model is not limited to predicted type distributions. The problems include, e.g., unrelated-machine scheduling. Balkanski et al.~\cite{balkanski2023strategyproof} use predicted processing times and give a strategyproof mechanism with consistency and robustness guarantees; this is not an auction-reserve result. Sample-based mechanisms can also be DSIC or universally truthful, so sampling does not imply that IC holds only in expectation. Every result must state separately the prediction object, approximation objective, error measure, and incentive concept.

\subsubsection{Representative Problem Classes}

\paragraph{Auctions and Pricing Problems}
Auction revenue can be sensitive to distributional assumptions and reserve prices. A learning-augmented mechanism must therefore state how revenue changes under prediction error while preserving its declared incentive property.

Medina and Vassilvitskii~\cite{medina2017revenue} relate learned reserve-price information to revenue. Combining mechanisms preserves DSIC only when the selection rule and payments are designed so that an agent cannot manipulate the branch; ordinary switching is not sufficient. Prasad et al.~\cite{prasad2023sideinfo} take a prior-free perspective and build a meta-mechanism using side information and a weakest-competitor construction to obtain welfare and revenue guarantees. Their result is not a predicted-virtual-value primal--dual warm start. Online bidding is non-strategic in its standard target-search formulation, but it is adjacent to this domain: Angelopoulos and Simon~\cite{angelopoulos2025bidding} prove a tight deterministic frontier for distributional predictions and separate randomized results for a single-valued oracle. Those bounds should not be conflated with one another or read as auction truthfulness guarantees.

\paragraph{Online Selection and Matching}
Secretary, prophet-inequality, and online bipartite-matching models all involve irrevocable choices, but their classical benchmarks differ: the secretary success probability is $1/e$, the basic prophet-inequality factor is commonly $1/2$, and online bipartite matching has the $1-1/e$ benchmark in its standard randomized model. They should not be summarized as one $1/e$ barrier.

Advice-augmented secretary and online matching papers establish guarantees in their own models~\cite{dutting2021secretaries,jin2022bipartite}. Recent secretary variants add fairness of selection~\cite{balkanski2024fair} or allow the decision-maker to choose the arrival order from predicted values~\cite{karisani2026secretary}; neither guarantee transfers to a different arrival model. Thresholding is not the same as randomized rounding, and portfolio results do not justify a general claim that multi-item mechanisms use Hedge over bidder observations. Where strategic agents are present, truthfulness must be proved in addition to the online performance bound.

\paragraph{Contracts and Information Design}
Contract design and Bayesian persuasion involve hidden action or hidden information, and the principal's utility depends on agent responses to incentives. Prediction-dependent objectives must therefore be analyzed together with the relevant incentive and best-response constraints.

P5 offers one possible lens: a principal could optimize worst-case expected utility over an ambiguity set around $\hat{\mathcal{D}}$. This construction should not be inferred from robust Bayesian persuasion, which studies uncertainty about information available to receivers and equilibrium selection under its own payoff criterion~\cite{dworczak2022robust}. The feasibility comparison also depends on the incentive model. For a DSIC auction with a distribution-independent feasible mechanism class, perturbing $\hat{\mathcal{D}}$ changes the objective but not the DSIC constraints; under Bayesian incentive constraints, the distribution can also enter feasibility. In a contract, the agent's best response is itself a function of the contract, so a perturbation can change which actions are induced. Stability of the induced action under distributional perturbation is therefore one quantity that an end-to-end theorem would have to control. We did not identify such a P5 theorem in the surveyed corpus and record this combination as open.

\paragraph*{Open gaps in this domain}
The strategic setting differs from many non-strategic models because prediction error need not be exogenous. An agent who understands the predictor may manipulate its inputs, so $\eta$ can become an equilibrium object rather than a fixed parameter. Moreover, an ``instance-optimal'' benchmark is delicate once the comparison class must itself satisfy an incentive constraint. Randomized strategic facility-location results illustrate why evidence status must remain model-specific: they establish upper and lower bounds for stated prediction types and objectives, not a domain-wide frontier~\cite{balkanski2024randomized}. Other reported consistency--robustness curves should be read as achievability results unless the cited work proves matching tightness.

\subsection{Cross-Cutting Methodological Developments}
\label{subsec:methodology_crosscutting}

\subsubsection{Classical Formulation}
Much of the theory conditions on advice $\hat y$. Unless modeled explicitly, this leaves predictor training, query frequency, and changes in the observation--outcome relationship outside the theorem.

\subsubsection{LAA Extension}
Cross-cutting work treats prediction generation, query budget, and uncertainty as design variables. The guarantee can therefore depend on how advice is produced, accessed, and represented, not only on the consuming algorithm.

\subsubsection{Representative Methodological Threads}
We describe four established methodological threads and two directions whose current guarantees remain conditional or incomplete.

\paragraph{Learning the prediction, not just consuming it}
A generic loss need not align with downstream cost because equal prediction errors can perturb different decisions. Decision-focused training differentiates through a relaxation or surrogate~\cite{wilder2019end}; algorithm-aware learning derives sample-complexity bounds for the prediction consumed by a fixed LAA~\cite{khodak2022learning}; explicit-predictor algorithms expose and update the learning rule in stated caching and scheduling models~\cite{elias2024explicit}.

\paragraph{Budgeting predictor queries}
Query-budget models optimize consultations separately from accuracy. Parsimonious caching queries a vanishing fraction of requests, and related work reduces predictions for caching and MTS~\cite{im2022parsimonious,sadek2024reduced}. Drygala et al.~\cite{drygala2023costly} instead attach a cost to prediction in ski rental and Bahncard and analyze whether and when to query. Section~\ref{subsec:overhead} introduces a separate abstract conversion into objective units.

\paragraph{Portfolios in place of a single predictor}
Problem-specific multiple-prediction and portfolio frameworks select among predictors under stated covering, regret, or competitive guarantees~\cite{anand2022multiple,dinitz2022portfolios}. This P4 control loop must include state-transition costs when decisions carry state.

\paragraph{Using uncertainty-quantified predictions}
Sun et al.~\cite{sun2024uncertainty} study ski rental and online search when a prediction carries a range and its stated coverage probability, and formulate a multi-instance learning problem. Gupta et al.~\cite{gupta2022epsilon} instead assume independently $\epsilon$-accurate predictions for caching and covering problems. Neither model calibrates arbitrary confidence scores or makes a coverage statement a generic downstream cost bound.

\paragraph{Two incompletely resolved directions}
Time-decay moment estimation proves results for a suffix-compatible heavy-hitter oracle and empirically instantiates it with sketches, recurrent models, and language models~\cite{nagawanshi2026moment}; it does not prove the oracle premise for a particular language model. A second direction combines distribution-free calibration with P5, but still needs a reduction from coverage to downstream cost.

\paragraph*{Open gaps in this domain}
Relating differentiable surrogates to a discrete objective remains problem specific. Our search found no general lower-bound characterization for query budgets. Portfolio regret converts to a competitive statement only through problem-specific relations between comparator loss and offline optimum.

\section{System-Level Implications and Composition}
\label{sec:las_architecture}

The results reviewed above usually analyze one algorithm, one prediction interface, and one objective. Deployed systems may connect several such components while sharing compute, state, and feedback. We use \emph{learning-augmented systems} (LAS) only as a compact name for examining those interfaces. The perspective is survey-derived: it exposes costs and assumptions that component theorems legitimately leave outside their scope, but supplies no guarantee by architecture alone.

\subsection{Three Considerations Often Outside Component Objectives}
\label{subsec:broken}

The opening scenarios isolate three omissions that a deployment must handle even when a component theorem is correct.

\paragraph{S1: predictions are not free}
Standard competitive objectives normally do not charge inference, training, or sample collection. Thus ``$O(1)$ overhead per request'' describes the algorithm around the prediction, not the cost of producing it. Costly-prediction and query-budget results make different parts of this resource explicit~\cite{drygala2023costly,im2022parsimonious,sadek2024reduced}; Section~\ref{subsec:overhead} discusses an abstract accounting in the objective's units.

\paragraph{S2: prediction quality is only partly observable and can be endogenous}
Robust competitive analysis may already cover adversarial request sequences; the separate issue is observability. A smoothness bound can depend on a target revealed only after the decision or censored by it. In admission, pricing, recommendation, and routing, actions may also change later training data. Such endogeneity requires a problem-specific feedback model, not a generic appeal to distribution shift.

\paragraph{S3: components are not isolated}
Adjacent stages can see different induced instances and different offline benchmarks. Consequently, component ratios alone do not determine a pipeline ratio; benchmark distortion, state transitions, and error sensitivity must also be related.

\subsection{A Reference Architecture}
\label{subsec:refarch}

Figure~\ref{fig:las_arch} is an accounting diagram rather than a prescribed architecture. It separates the decision path from feedback whose signals may arrive at different times.

\begin{figure}[t!]
\centering
\resizebox{0.75\textwidth}{!}{%
\begin{tikzpicture}[
  font=\footnotesize,
  layer/.style={draw, rounded corners=2pt, minimum width=5.6cm, minimum height=0.78cm, align=center, thick},
  lbl/.style={font=\scriptsize\itshape, align=left},
  ar/.style={-{Stealth[length=4pt]}, thick},
  fb/.style={-{Stealth[length=4pt]}, thick, dashed}
]
\node[layer, fill=black!4] (sig) at (0,0) {\textbf{Signal}\\[-1pt]\scriptsize telemetry, traces, logs, request stream};
\node[layer, fill=black!4, above=0.85cm of sig] (pred) {\textbf{Prediction}\\[-1pt]\scriptsize models, calibration, uncertainty};
\node[layer, fill=black!4, above=0.85cm of pred] (dec) {\textbf{Decision and safety}\\[-1pt]\scriptsize LAA core $+$ constraints, fallback, or robustifier};
\node[layer, fill=black!4, above=0.85cm of dec] (act) {\textbf{Actuation}\\[-1pt]\scriptsize evict, schedule, provision, route};

\draw[ar] (sig) -- (pred);
\draw[ar] (pred) -- node[right=1pt, lbl] {$\hat y$, confidence} (dec);
\draw[ar] (dec) -- (act);

\draw[fb] (act.east) -- ++(0.75,0) |- node[pos=0.25, right=1pt, lbl] {\textbf{fast}\\ safety signal\\ ms--s} (dec.east);
\draw[fb] (act.west) -- ++(-0.8,0) |- node[pos=0.25, left=1pt, lbl] {\textbf{slow}\\ labels, drift\\ h--d} (pred.west);
\end{tikzpicture}}
\caption{An accounting view of a learning-augmented deployment. Solid arrows form the decision path and dashed arrows denote feedback. Fast mechanisms can use only immediately observable signals; delayed outcomes may support recalibration or retraining.}
\label{fig:las_arch}
\Description{A four-layer stack drawn bottom to top: Signal, Prediction, Decision and safety, and Actuation, connected by solid arrows. A fast dashed feedback path returns to the decision layer and a slower dashed path returns to the prediction layer.}
\end{figure}

The prediction layer must declare the output type and any calibrated uncertainty used by the decision theorem. The decision layer contains the augmented algorithm plus separately justified feasibility checks, constraints, fallbacks, or formal robustifiers. Replacing a model preserves a theorem only when the theorem quantifies over arbitrary predictions or the replacement satisfies every assumption. Safety mechanisms can act only while their required signal is available and the affected action remains reversible.

\subsection{Modularized Learning}
\label{subsec:modular}

Modularity is an engineering aid, not a prerequisite for a guarantee. Separating prediction from decision exposes a typed object $\hat y$; separating model from policy makes replacement obligations explicit; separating update from execution permits staged validation and rollback. Mathematical and service boundaries need not coincide, so the design record should state the objective, benchmark, state, and assumptions attached to each analyzed interface.

\subsection{Inference Graphs, and How Guarantees Compose}
\label{subsec:compose}

Model a pipeline as a directed acyclic graph whose nodes are learning-augmented components. Component $i$ may have consistency $c_i$, robustness $r_i$, and smoothness $f_i$ relative to its own instance and benchmark $\mathrm{OPT}_i$. Those triples are not an algebra for a pipeline because an output may change the next instance and benchmark. The following propositions are expository sufficient conditions derived to clarify this gap; they synthesize elementary inequalities rather than claim general composition theorems from the cited literature.

\begin{definition}[Inference graph]
\label{def:infgraph}
An \emph{inference graph} is a directed acyclic graph (DAG) $G = (V, E)$ in which each node $i \in V$ is a learning-augmented component with a declared triple $(c_i, r_i, f_i)$, and each edge $(i,j) \in E$ indicates that the output of $i$ is an input to the predictor or the decision rule of $j$. A node with no incoming edges consumes an external prediction. A node with no outgoing edges actuates.
\end{definition}

The first pattern is \emph{additive}: component costs sum and their benchmarks admit a common comparison.

\begin{proposition}[Expository condition: additive composition]
\label{prop:additive}
Fix the external instance and suppose $\mathrm{cost}_{\mathrm{sys}}=\sum_i\mathrm{cost}_i$ pathwise and $\mathrm{OPT}_{\mathrm{sys}}>0$. Let $\mathcal H_i$ determine reachable $I_i$, its prediction, and its error before component $i$ draws new randomness. Suppose every zero-additive-term component bound holds conditionally on each reachable history,
\[
  \mathbb E[\mathrm{cost}_i\mid\mathcal H_i]
  \leq a_i(\mathcal H_i)\,\mathrm{OPT}_i(I_i).
\]
This requires fresh randomness independent of prior history or a guarantee valid against adaptive instance selection. If $\mathrm{OPT}_{\mathrm{sys}}\geq\beta\sum_i\mathrm{OPT}_i(I_i)$ holds pathwise for $\beta\in(0,1]$, then
\begin{equation}
  \frac{\mathbb E[\mathrm{cost}_{\mathrm{sys}}]}{\mathrm{OPT}_{\mathrm{sys}}}
  \leq \frac{1}{\beta}\,
  \mathbb E\!\left[\max_i a_i(\mathcal H_i)\right].
    \label{eq_additive}
\end{equation}
Consequently, exact predictions give $c_{\mathrm{sys}}\leq\max_i c_i/\beta$, arbitrary predictions give $r_{\mathrm{sys}}\leq\max_i r_i/\beta$, and deterministic error caps $\eta_i\leq\bar\eta_i$ give $f_{\mathrm{sys}}(\bar{\boldsymbol\eta})\leq\max_i f_i(\bar\eta_i)/\beta$ for non-decreasing $f_i$.
\end{proposition}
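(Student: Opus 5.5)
The plan is to bound the numerator $\mathbb E[\mathrm{cost}_{\mathrm{sys}}]$ by conditioning each summand on its own history and then comparing termwise with the pathwise lower bound on $\mathrm{OPT}_{\mathrm{sys}}$. Since the external instance is fixed, $\mathrm{OPT}_{\mathrm{sys}}$ is a deterministic positive constant. The only randomness is the components' internal draws, and these may shape the reachable instances $I_i$. First, by pathwise additivity and linearity of expectation, $\mathbb E[\mathrm{cost}_{\mathrm{sys}}]=\sum_i\mathbb E[\mathrm{cost}_i]$. By the tower property, $\mathbb E[\mathrm{cost}_i]=\mathbb E[\mathbb E[\mathrm{cost}_i\mid\mathcal H_i]]$. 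The conditional hypothesis then gives $\mathbb E[\mathrm{cost}_i]\le\mathbb E[a_i(\mathcal H_i)\,\mathrm{OPT}_i(I_i)]$.

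This is the step where the fresh-randomness or adaptive-validity requirement is used. Component $i$'s bound is stated for a fixed instance, so we need it to remain valid when $I_i$ is chosen by earlier random outcomes. That holds precisely because $\mathcal H_i$ determines $I_i$, its prediction and its error, and because component $i$'s new randomness is independent of $\mathcal H_i$ (or its guarantee holds against adaptive selection). Conditioning on $\mathcal H_i$ therefore reduces to the fixed-instance case.

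Second, I would combine the terms inside a single expectation:
\[
\sum_i\mathbb E[a_i(\mathcal H_i)\mathrm{OPT}_i(I_i)]\le\mathbb E\Big[\max_j a_j(\mathcal H_j)\sum_i\mathrm{OPT}_i(I_i)\Big]\le\frac{\mathrm{OPT}_{\mathrm{sys}}}{\beta}\,\mathbb E\Big[\max_j a_j(\mathcal H_j)\Big].
\]
The first inequality uses $\mathrm{OPT}_i\ge0$. The second uses the pathwise bound $\sum_i\mathrm{OPT}_i(I_i)\le\mathrm{OPT}_{\mathrm{sys}}/\beta$ with $\mathrm{OPT}_{\mathrm{sys}}$ deterministic. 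Both bounds must be applied pathwise before taking the expectation, because $\max_j a_j$ and $\sum_i\mathrm{OPT}_i$ are correlated random variables. Applying them pathwise avoids any need to factor an expectation of a product. Dividing by $\mathrm{OPT}_{\mathrm{sys}}>0$ gives \eqref{eq_additive}.

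Third, the corollaries follow by choosing $a_i$ appropriately and bounding $\max_i a_i$ by a constant, so the outer expectation becomes trivial.
\begin{itemize}
\item Under exact predictions, every reachable history has $\eta_i=0$, so Definition~\ref{def:consistency} allows $a_i=c_i$.
\item Under arbitrary predictions, Definition~\ref{def:robustness} allows $a_i=r_i$.
\item Under the deterministic caps, Definition~\ref{def:smoothness} with monotonicity gives $a_i(\mathcal H_i)=f_i(\eta_i)\le f_i(\bar\eta_i)$.
\end{itemize}

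The main obstacle is not the algebra but making the conditional application of the component guarantees legitimate. The component definitions are stated for a fixed $(I,\hat y)$ and take expectations over internal randomness. I would argue carefully that the regular conditional distribution of component $i$'s fresh randomness, given $\mathcal H_i$, coincides with its unconditional law. This is what lets the fixed-instance bound be invoked pointwise in $\mathcal H_i$. Components with nonzero additive terms are excluded by hypothesis, so no additive slack has to be tracked.
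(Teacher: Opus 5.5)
Your proposal is correct and follows the paper's proof sketch essentially step for step. You use the tower property with the conditional component bounds, then bound $\sum_i a_i(\mathcal H_i)\mathrm{OPT}_i(I_i)\le \max_j a_j(\mathcal H_j)\,\mathrm{OPT}_{\mathrm{sys}}/\beta$ pathwise before taking the expectation. Your explicit remarks that $\mathrm{OPT}_{\mathrm{sys}}$ is deterministic once the external instance is fixed, and that $\mathrm{OPT}_i\ge 0$, spell out what the paper leaves implicit. Like the paper, you also tacitly use $\max_j a_j(\mathcal H_j)\ge 0$, which is harmless for ratios of nonnegative costs.
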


\begin{proof}[Proof sketch]
By the tower property,
\[
\begin{aligned}
\mathbb E[\mathrm{cost}_{\mathrm{sys}}]
  &=\sum_i\mathbb E\!\left[\mathbb E[\mathrm{cost}_i\mid\mathcal H_i]\right] \\
  &\leq\mathbb E\!\left[\sum_i a_i(\mathcal H_i)\mathrm{OPT}_i(I_i)\right] \\
  &\leq\frac{\mathrm{OPT}_{\mathrm{sys}}}{\beta}
       \mathbb E\!\left[\max_i a_i(\mathcal H_i)\right].
\end{aligned}
\]
\end{proof}

The conditional guarantees and pathwise benchmark decomposition must be proved for the application; marginal component bounds are insufficient. Additive terms $b_i$ contribute $\sum_i\mathbb E[b_i]$, so multiplicative triples no longer suffice.

The second pattern is \emph{cascaded}: the output of component $i$ helps define the instance $I_{i+1}$ seen by component $i+1$. In this case a local competitive ratio compares the wrong pair of quantities to support a product rule.

\begin{proposition}[Expository condition: cascaded composition]
\label{prop:cascade}
Let $1\to2\to\cdots\to m$ be a chain in which component $i$ induces reachable $I_{i+1}$. Suppose system cost is the last-stage cost and $\mathrm{OPT}_1(I_1)>0$. Let $\mathcal H_m$ determine $I_m$ before the last component draws new randomness. Require $\mathrm{OPT}_m(I_m)>0$ on every reachable history, absent an additive zero-optimum convention, and suppose
\[
  \mathbb E[\mathrm{cost}_m(I_m)\mid\mathcal H_m]
  \leq r_m\,\mathrm{OPT}_m(I_m).
\]
This requires fresh randomness independent of $\mathcal H_m$ or a guarantee valid against adaptive input selection. For each edge, suppose finite $d_i\geq0$ gives the pathwise inequality
\[
    \mathrm{OPT}_{i+1}(I_{i+1}) \leq d_i\,\mathrm{OPT}_i(I_i)
\]
for every reachable transition. Then
\begin{equation}
    \mathbb E[\mathrm{cost}_{\mathrm{sys}}]
    \;\leq\;
    r_m\!\left(\prod_{i=1}^{m-1} d_i\right)\mathrm{OPT}_1(I_1).
    \label{eq_cascade}
\end{equation}
\end{proposition}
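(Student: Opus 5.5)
The plan is to prove \eqref{eq_cascade} in two moves: first peel off the last stage using the conditional guarantee, then collapse the chain of benchmark inequalities pathwise. This mirrors the proof sketch of Proposition~\ref{prop:additive}. The only randomness-sensitive step is the first one; everything after it is a deterministic inequality along every reachable history.

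\textbf{Step 1 (tower property).} Since $\mathrm{cost}_{\mathrm{sys}}=\mathrm{cost}_m(I_m)$, we can write $\mathbb E[\mathrm{cost}_{\mathrm{sys}}]=\mathbb E\bigl[\mathbb E[\mathrm{cost}_m(I_m)\mid\mathcal H_m]\bigr]$. Applying the assumed conditional bound inside the outer expectation gives $\mathbb E[\mathrm{cost}_{\mathrm{sys}}]\le r_m\,\mathbb E[\mathrm{OPT}_m(I_m)]$. This is the point where the hypotheses on $\mathcal H_m$ matter.
\begin{itemize}
\item $\mathcal H_m$ must determine $I_m$, so that $\mathrm{OPT}_m(I_m)$ is $\mathcal H_m$-measurable.
\item Either the last stage's randomness is fresh, or its guarantee is valid against adaptive input selection. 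Otherwise the marginal bound $r_m$ would not justify the conditional one.
\item $\mathrm{OPT}_m(I_m)>0$ on reachable histories, so the ratio form of the guarantee is meaningful.
\end{itemize}

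\textbf{Step 2 (pathwise telescoping).} On every reachable history, chain the edge inequalities by induction on $i$. The induction hypothesis is
\[
\mathrm{OPT}_{i}(I_{i})\le\Bigl(\prod_{j<i}d_j\Bigr)\mathrm{OPT}_1(I_1).
\]
The inductive step multiplies by $d_i$, which is legitimate because $d_i\ge0$ preserves the inequality direction. Finiteness of each $d_i$ keeps the product well defined.

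\textbf{Step 3 (combine).} The external instance $I_1$ is fixed, so $\mathrm{OPT}_1(I_1)$ is deterministic and passes through the expectation. Taking expectations of the pathwise bound from Step 2 and inserting it into Step 1 yields \eqref{eq_cascade}. The assumption $\mathrm{OPT}_1(I_1)>0$ is needed only to divide and read the result as a system competitive ratio.

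The main obstacle is Step 1, specifically justifying conditioning on a history whose distribution depends on earlier stages' randomness and outputs. I would handle it by stating the conditional guarantee exactly as a hypothesis, as the proposition already does, rather than deriving it from a marginal bound. Step 2 is routine. It is worth remarking that the $d_i$ compare benchmarks, not component costs: no $c_i$ or $r_i$ for $i<m$ enters the bound, since upstream quality is absorbed entirely into the $d_i$. This is why a product of local ratios is not what appears.
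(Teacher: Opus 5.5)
Your proposal is correct and follows the paper's own proof sketch. In both, the tower property together with the conditional last-stage guarantee gives $\mathbb E[\mathrm{cost}_{\mathrm{sys}}]\le r_m\,\mathbb E[\mathrm{OPT}_m(I_m)]$, and iterating the pathwise edge inequalities (your induction, using $d_i\ge 0$) bounds this by $r_m\bigl(\prod_{i=1}^{m-1}d_i\bigr)\mathrm{OPT}_1(I_1)$ for the fixed external instance.
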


\begin{proof}[Proof sketch]
The tower property bounds $\mathbb E[\mathrm{cost}_m]$ by $r_m\mathbb E[\mathrm{OPT}_m(I_m)]$. Iterating the pathwise edge inequalities bounds the latter benchmark by $(\prod_{i=1}^{m-1}d_i)\mathrm{OPT}_1(I_1)$, proving~\eqref{eq_cascade}.
\end{proof}

The factors $d_i$ do not follow from $(c_i,r_i,f_i)$. Earlier-stage ratios matter only if they control downstream benchmark distortion; perfect component consistency therefore need not imply perfect system consistency.

Smoothness composes only under an additional regularity condition, and stating it clarifies what an inference graph must declare about its edges.

\begin{proposition}[Expository condition: error propagation]
\label{prop:errprop}
For a path $1\to2\to\cdots\to m$, equip each component-output space and downstream-input space with metrics $d_i^{\mathrm{out}}$ and $d_{i+1}^{\mathrm{in}}$. Let the ideal edge map $F_i$ be $L_i$-Lipschitz, let $o_i^*$ and $o_i$ be the ideal and actual outputs with $d_i^{\mathrm{out}}(o_i,o_i^*)\leq g_i(\eta_i)$, and let $x_{i+1}^*=F_i(o_i^*)$. If the actual downstream input $x_{i+1}$ also satisfies $d_{i+1}^{\mathrm{in}}(x_{i+1},F_i(o_i))\leq \eta_{i+1}^{\mathrm{own}}$, then the triangle inequality gives
\begin{equation}
    \eta_{i+1}:=d_{i+1}^{\mathrm{in}}(x_{i+1},x_{i+1}^*)
    \;\leq\; L_i\, g_i(\eta_i) + \eta_{i+1}^{\mathrm{own}}.
    \label{eq_errprop}
\end{equation}
If these conditions hold on every edge of the path, the recursion can be unrolled and a last-stage smoothness bound evaluated at the resulting upper bound on $\eta_m$. A general DAG with multiple parents additionally needs a joint map from the product of parent-output spaces and a compatible product metric; applying the single-parent inequality independently does not establish such a bound. If an edge lacks an appropriate metric, Lipschitz bound, or additive perturbation model, this argument supplies no transferred smoothness guarantee.
\end{proposition}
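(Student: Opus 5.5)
The argument is a single application of the triangle inequality in the downstream input metric $d_{i+1}^{\mathrm{in}}$, with the actual downstream input, the ideal edge map applied to the actual output, and the ideal input as the three points. First, insert the intermediate point $F_i(o_i)$:
\[
d_{i+1}^{\mathrm{in}}(x_{i+1},x_{i+1}^*)\le d_{i+1}^{\mathrm{in}}\bigl(x_{i+1},F_i(o_i)\bigr)+d_{i+1}^{\mathrm{in}}\bigl(F_i(o_i),F_i(o_i^*)\bigr),
\]
using $x_{i+1}^*=F_i(o_i^*)$. The first term is at most $\eta_{i+1}^{\mathrm{own}}$ by the perturbation hypothesis. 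For the second term, I use that $F_i$ is $L_i$-Lipschitz from $(\cdot,d_i^{\mathrm{out}})$ to $(\cdot,d_{i+1}^{\mathrm{in}})$, which bounds it by $L_i\,d_i^{\mathrm{out}}(o_i,o_i^*)$. The output-error hypothesis then gives at most $L_i g_i(\eta_i)$. Together these give \eqref{eq_errprop}.

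For the unrolling claim, I define recursively $\bar\eta_1:=\eta_1$ and $\bar\eta_{i+1}:=L_i g_i(\bar\eta_i)+\eta_{i+1}^{\mathrm{own}}$. I then show $\eta_i\le\bar\eta_i$ by induction along the path. The inductive step applies \eqref{eq_errprop} and then needs $g_i(\eta_i)\le g_i(\bar\eta_i)$. This is the one point requiring care: I must assume each $g_i$ is non-decreasing, which should be stated explicitly. Alternatively, $g_i$ can be replaced by its monotone envelope $\tilde g_i(t)=\sup_{s\le t}g_i(s)$, which still upper-bounds the output distance. Finally, since the last-stage smoothness envelope $f_m$ is non-decreasing by Definition~\ref{def:smoothness}, I get $f_m(\eta_m)\le f_m(\bar\eta_m)$. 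This yields the last-stage bound at the unrolled error.

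The remaining sentences of the statement are cautionary rather than assertions to prove, and I would justify each with a brief remark. For a multi-parent DAG, the ideal input $x_j^*$ is a function of the joint tuple of parent outputs, so a Lipschitz bound is needed for the joint map under some product metric. Per-parent single-edge inequalities do not control the joint map without such a bound. A one-line example makes this concrete: take $F(o_1,o_2)=o_1o_2$, which is Lipschitz in each argument separately on bounded slices but not jointly on unbounded domains. Similarly, if an edge lacks a metric or Lipschitz constant, the chain of inequalities cannot even be written down, so no transfer follows from this argument.

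I expect no real obstacle here. The only subtlety is the monotonicity of $g_i$ in the unrolling step, and I would make it explicit as described above.
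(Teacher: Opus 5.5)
Your proof is correct and matches the paper's argument: the paper also obtains \eqref{eq_errprop} by one triangle inequality through the intermediate point $F_i(o_i)$, followed by the $L_i$-Lipschitz property of $F_i$ and the output-error bound $g_i(\eta_i)$. Your requirement that each $g_i$ be non-decreasing, or be replaced by its monotone envelope, for the unrolling step is a sound way to make explicit an assumption the paper leaves implicit. Your final step also correctly relies on the non-decreasing envelope in Definition~\ref{def:smoothness}.
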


Component guarantees must therefore be accompanied by interface facts: which instance is produced, how benchmarks relate, which costs have already been incurred, and which sensitivity property supports error propagation. Safety placement has no universal dominance rule: a monitor can constrain only observable costs and reversible actions, and a fallback requires its own proof. Pipeline depth matters through these edge effects, not through automatic multiplication of local ratios.

\subsection{Control Loops Separated by Timescale}
\label{subsec:loops}

Feedback operates on application-specific timescales determined by observability and reversibility. A fast mechanism can use only an immediately available hard constraint or accountable cost; prediction may be restricted to actions that preserve feasibility~\cite{agrawal2023safety}. Delayed labels can support recalibration or retraining but cannot repair an irreversible action. Drift diagnostics, intervention rates, and confidence are proxies rather than certificates. Portfolio selection has a guarantee only when its losses and comparator satisfy a problem-specific reduction~\cite{dinitz2022portfolios}.

\subsection{Formal Robustifiers and Operational Safety Mechanisms}
\label{subsec:safety}

Formal robustifiers, feasibility checks, circuit breakers, restricted action spaces, and fallbacks must not be treated as interchangeable. A formal robustifier has a problem-specific rule and proved guarantee; an operational mechanism does not inherit a competitive ratio by monitoring cost or running a baseline. Evaluation should state the observed signal, maintained state, reversible action, fallback feasibility, and evidence justifying a transition. Intervention rate is useful operationally but should be reported with objective value, violations, inference and monitoring overhead, and fallback frequency.

\subsection{Case Studies}
\label{subsec:cases}

The architecture above provides a common accounting lens for selected deployed systems; it is not a claim that they implement one shared design. Table~\ref{tab:cases} separates eight systems or system classes that were previously easy to conflate. Each row states what is predicted, what evidence exists, what prediction costs are visible, and what happens when the predictor is wrong.

\begin{table}[t!]
\centering
\setlength{\tabcolsep}{3pt}
\caption{Eight systems-oriented examples viewed through the LAS questions. Each row refers only to the named system or class; formal guarantees, operational safeguards, and empirical behavior are not transferred across rows.}
\label{tab:cases}
\scriptsize
\begin{tabular}{@{}
    >{\raggedright\arraybackslash}p{1.55cm}
    >{\raggedright\arraybackslash}p{2.62cm}
    >{\raggedright\arraybackslash}p{3.22cm}
    >{\raggedright\arraybackslash}p{2.38cm}
    >{\raggedright\arraybackslash}p{3.24cm}
@{}}
\toprule
\textbf{System} & \textbf{Predicted quantity} & \textbf{Evidence and guarantee} & \textbf{Prediction cost} & \textbf{Behavior under error} \\
\midrule
LRB~\cite{song2020lrb} & Reuse distance or next access for eviction & Empirical cache performance; no native competitive guarantee & Inference on the eviction path & Error can increase misses; no theorem-level worst-case cap follows from LRB alone \\
\addlinespace[2pt]
Parrot~\cite{liu2020parrot} & Eviction priority learned by imitation & Empirical cache performance; no native competitive guarantee & Policy inference on eviction & Error can change eviction order; robustness remains empirical for Parrot itself \\
\addlinespace[2pt]
PGM~\cite{ferragina2020pgm} & Key position with certified segment error & Static recursive index has $O(\!\log m+\log(\epsilon+1))$ query time and $\Theta(m)$ space & Model evaluation and bounded local search & Correctness is retained by searching the certified interval; update costs use a separate dynamic analysis \\
\addlinespace[2pt]
ALEX~\cite{ding2020alex} & Key position and layout cost & Correct index semantics plus empirical latency and memory results; no $(c,r,f)$ contract & Model evaluation, inserts, splits, and expansion & Adaptive structure handles updates empirically; performance varies by workload~\cite{wongkham2022updatable} \\
\addlinespace[2pt]
Decima~\cite{mao2019decima} & Scheduling policy from job DAG and cluster state & Empirical system objective; no competitive guarantee & Offline trace training and online policy inference & Behavior beyond the evaluated workloads is an empirical question; no LAA theorem bounds it \\
\addlinespace[2pt]
Autopilot~\cite{rzadca2020autopilot} & Resource settings from workload observations & Production evaluation and operational constraints; no $(c,r,f)$ theorem & Continuous observation and recommendation & Limits and gradual changes are safeguards, not competitive-ratio certificates \\
\addlinespace[2pt]
Bao~\cite{marcus2021bao} & Performance of optimizer hint sets & Empirical query-performance evidence; action space is restricted to hint sets & Plan scoring and periodic adaptation & Restriction narrows possible plans but does not itself bound regret or competitive ratio \\
\addlinespace[2pt]
Cardinality estimators~\cite{wang2021cardinality,sun2022cardinality} & Relation or subplan cardinalities & Accuracy and downstream latency are empirical and workload dependent & Training and inference per selected estimator & Optimizer behavior mediates error; estimator accuracy alone does not determine plan cost \\
\bottomrule
\end{tabular}
\end{table}

The evidence falls into three declared categories: formal theorem, operational safeguard, and empirical evaluation. A later formal robustifier must be treated separately from its empirical base. In particular, the \textsc{Guard}$\&$LRB and \textsc{Guard}$\&$Parrot guarantees follow from the paper's proofs that these base algorithms are RB-following in its model, together with the robustification theorem~\cite{chen2025robustifying}. This conclusion remains specific to the stated paging and prediction assumptions. Learned storage routing likewise uses predicted latency as a hint but requires separate assumptions about routing constraints and fallback behavior~\cite{hao2020linnos}.

\subsection{Deployment Practice}
\label{subsec:practice}

Deployment practice follows the same accounting. Report interventions together with objective values, violations, inference cost, and policy version. Shadow execution can test latency and proposed actions, but policy-dependent state or labels require explicit replay, off-policy, or canary assumptions. Rollback must handle state conversion and already-incurred costs. Incident reviews should attribute failures to signal, prediction, decision, or actuation. Finally, every prediction-consuming edge should record its object, benchmark relation, sensitivity evidence, and remaining reversible costs; depth alone is not a competitive-ratio parameter.

\section{Semantic Predictions and Large Language Models}
\label{sec:llm_integration}

The learning-augmented literature usually assumes a prediction object with task-defined semantics: a number, ranking, distribution, or combinatorial structure. A foundation model may instead emit free-form text. This section asks what changes when text must be converted into the prediction space of Figure~\ref{fig:las_arch}. It is deliberately conservative: operational uses are growing, whereas results of the form in Definitions~\ref{def:consistency}--\ref{def:smoothness} with a language model in the prediction layer remain sparse.

\subsection{What Semantic Prediction Adds, and What It Does Not}
\label{subsec:semantic}

Foundation models can map unstructured context, e.g., tickets, changelogs, logs, or operator notes, to a declared task object. This is not unique to them, so accuracy and cost should be compared with parsers, information-extraction systems, and other NLP models.

Token-sequence likelihood is not automatically a calibrated probability of task correctness; sequence-likelihood calibration and semantic-uncertainty methods address narrower versions of this mismatch~\cite{zhao2023slic,farquhar2024semantic}. Free-form text therefore cannot be inserted into P1--P5 without an interface that maps it to the prediction space and defines an error measure. Once such an interface is supplied, any compatible mechanism may apply, but its theorem must account for the adapter's failure modes rather than treating semantics as a new guarantee.

Nagawanshi et al.~\cite{nagawanshi2026moment} provide a boundary case: time-decay streaming theorems assume a suffix-compatible heavy-hitter oracle, while experiments instantiate it with CountSketch, an LSTM, ChatGPT, and Gemini. The experiments do not prove that a named model satisfies the oracle condition on arbitrary streams. This is conditional-interface evidence, not an unconditional LLM guarantee.

\subsection{Four Integration Patterns}
\label{subsec:llm_patterns}

Four roles are useful to distinguish. Their risk is not totally ordered: it depends on validation, runtime authority, and whether a human acts on the output.

\emph{As a predictor.} The model emits $\hat y$, e.g., a duration or job order. P1--P5 apply only after parsing and establishing the theorem's error assumptions; calibration is not universally sufficient.

\emph{As an adapter.} The model extracts a structured input from text. A guarantee survives only if it tolerates arbitrary outputs or validation enforces its assumptions; range restriction alone gives no Lipschitz bound.

\emph{As a policy synthesizer.} The model proposes an offline rule or fallback. The artifact still needs the verification and deployment controls applied to human-written code.

\emph{As an explainer.} The model renders telemetry for an operator. Even outside the automated path, it can affect consequential actions, so factuality, provenance, and uncertainty remain necessary.

The architectural question is where an unvalidated output can change state. The interface should expose only the needed authority, with validation proportional to consequences.

\subsection{The Calibration Problem}
\label{subsec:llm_calib}

Suppose a model is used as a predictor and the surrounding algorithm needs confidence to set $\lambda$. Token-level likelihood need not rank generated sequences by task quality~\cite{zhao2023slic}, and its relationship to correctness can vary with prompts, decoding, and model versions. Agreement across sampled completions measures self-consistency rather than labeled accuracy; semantic entropy is one empirically studied alternative uncertainty signal, not a universal correctness certificate~\cite{farquhar2024semantic}.

One possible route is a conformal or other distribution-free wrapper that converts task outputs and a calibration sample into a prediction set. Standard split-conformal coverage uses exchangeability; extensions to non-exchangeable data incur model-specific coverage penalties~\cite{oliveira2024split}. Marginal coverage is not automatically a bound on downstream cost. Existing online algorithms with UQ make the uncertainty statement part of a problem-specific input model~\cite{sun2024uncertainty}; they do not provide a generic reduction from conformal coverage or language-model confidence to a P5 cost guarantee. Such a reduction would require a separate end-to-end analysis.

When calibrated confidence is unavailable, available responses are application dependent, e.g., abstaining, requesting review, constraining the action, or using a fallback with an independently justified guarantee. A generic safety mechanism does not automatically bound the resulting cost or harm (\S\ref{subsec:safety}).

\subsection{Failure Modes Specific to Generative Components}
\label{subsec:llm_failures}

Four failure mechanisms merit explicit treatment. Some have numerical analogues, while text and tool use add interfaces; P1--P5 do not remove them.

\emph{Fabrication.} A language model can emit a fluent, specific, and wrong statement. Numerical predictors can also be confidently wrong, so any switching rule that consumes confidence must assume or verify a relationship between that confidence and task error.

\emph{Injection.} The signal layer may carry attacker-influenced text such as filenames or log lines echoing request content. Indirect prompt-injection demonstrations show that retrieved data can redirect an integrated model or tool workflow~\cite{greshake2023prompt}. Numerical systems also face adversarial inputs, but instruction/data ambiguity creates a distinct interface risk. Treat such text as untrusted data, isolate tool authority, and validate structured outputs.

\emph{Non-determinism and version drift.} Sampling can produce different outputs across calls, and model updates can change behavior. A guarantee must quantify over that randomness and specify the version or model class to which its assumptions apply. Pinning, regression tests, and staged promotion help operationally but do not replace that analysis.

\emph{Feedback contamination.} If model outputs are logged and later ingested as training data or context, model-generated errors can re-enter later inputs. Experiments on recursively generated training data demonstrate one concrete degradation mechanism, although they do not characterize every operational feedback loop~\cite{shumailov2024collapse}. This is one form of the endogeneity described in S2. Provenance labels can support separate treatment of model-generated content in later training and evaluation.

\subsection{The State of the Evidence}
\label{subsec:llm_evidence}

We separate three claims that are often stated together.

What is \emph{established}: an AIOps survey documents language-model applications to log analysis, incident triage, and root-cause analysis~\cite{zhang2026aiops}, but does not itself establish production prevalence or an LAA guarantee. Time-decay moment estimation separately proves results for a declared heavy-hitter oracle and tests language-model implementations~\cite{nagawanshi2026moment}.

What remains \emph{conditional}: a language model can be a predictor if an adapter defines the object and its output satisfies the theorem's premise. No surveyed work proves this unconditionally for a particular model over the theorem's full instance class; calibration alone is insufficient.

What is \emph{unsupported without a new model and proof}: that semantic reasoning improves an existing guarantee automatically. Richer side information can change the formal information structure and can therefore change achievable bounds, but only relative to a clearly specified predictor class, error model, and benchmark. Lower bounds proved for a fixed information model cannot simply be transferred to a richer one, nor can they be declared overcome without re-analysis.

Thus, a conditional oracle reduction exists for one streaming family~\cite{nagawanshi2026moment}, but no surveyed result unconditionally verifies its premise for a particular language model or transfers a general LAA guarantee to free-form output.

\section{Open Challenges}
\label{sec:open_challenges}

Four cross-cutting problems concern predictor cost, composition, endogenous error, and empirical comparability.

\subsection{Pricing the Prediction}
\label{subsec:overhead}

The competitive ratios of Section~\ref{sec:domains} charge nothing for obtaining $\hat y$. One survey-derived accounting convention is the following definition, which is a modeling proposal rather than a theorem.

\begin{definition}[$\kappa$-augmented competitive ratio]
\label{def:kappa_cr}
Let $\kappa \geq 0$ be a declared conversion from one predictor invocation to the units of the objective, and let $q_{\mathcal A}(\sigma,\hat y)$ be the possibly random number of invocations made by algorithm $\mathcal{A}$. Define the augmented expected cost
$\mathrm{cost}_\kappa(\mathcal{A}(\sigma,\hat y)) = \mathbb E[\mathrm{cost}(\mathcal{A}(\sigma,\hat y)) + \kappa\, q_{\mathcal A}(\sigma,\hat y)]$, where the expectation is over the algorithm's internal randomness, and
\begin{equation}
    \mathrm{CR}_\kappa(\mathcal{A}) \;=\; \sup_{\sigma,\,\hat y} \frac{\mathrm{cost}_\kappa(\mathcal{A}(\sigma, \hat y))}{\mathrm{OPT}(\sigma)},
    \label{eq_kappa_cr}
\end{equation}
where the supremum ranges over pairs with $\mathrm{OPT}(\sigma)>0$, and $\mathrm{OPT}$ remains the prediction-free offline optimum and is not charged $\kappa$.
\end{definition}

For a fixed pair on which $q_{\mathcal A}=T$ almost surely, the augmented ratio gains $\kappa T/\mathrm{OPT}(\sigma)$; one may not instead add separate suprema. Exact multiplicative $1$-consistency against the uncharged optimum is therefore impossible whenever an exact-prediction instance has $\kappa\,\mathbb E[q_{\mathcal A}]>0$, absent an additive term or another benchmark. Drygala et al.~\cite{drygala2023costly} analyze cost and query timing for specific stopping problems; reduced-query models instead budget calls~\cite{im2022parsimonious,sadek2024reduced}. Neither yields a universal three-way frontier.

\subsection{A Calculus for Composition}
\label{subsec:compose_open}

Propositions~\ref{prop:additive}-\ref{prop:errprop} require benchmark decomposition, distortion factors, or metric sensitivity; node triples alone do not determine a system triple. A fuller calculus must model correlated claims, policy-dependent cycles, objectives, state constraints, and limited fallback capacity.

\subsection{Endogenous, Adversarial and Strategic Error}
\label{subsec:adversarial}

Bounds conditional on realized $\eta$ do not model how decisions change later observations. Performative prediction studies decision-dependent distributions~\cite{perdomo2020performative}, feedback control studies explicitly modeled dynamics~\cite{astrom2008feedback}, and strategic classification studies utility-driven adaptation~\cite{hardt2016strategic}. These distinct tools do not by themselves yield a consistency--robustness guarantee under policy-dependent data.

\subsection{Benchmarking and Reproducibility}
\label{subsec:bench}

A $(c,r)$ pair follows from definitions and proof; experiments test implementations and workloads. Comparable benchmarks should declare traces, shifts, error protocols, objectives, feasibility, interventions, predictor calls, latency, prediction type, trust, splits, and baselines~\cite{chledowski2021robust,wongkham2022updatable,wang2021cardinality}. They complement rather than validate universal guarantees.

\section{Conclusion}
\label{sec:conclusion}

Learning-augmented algorithms give theorem-specific consistency, robustness, and error-dependent guarantees. Our two axes record five non-exclusive construction mechanisms and same-model matching-bound status; neither is a maturity ranking. This separates proved frontiers, achieved upper bounds, and empirical systems evidence.

Systems also incur prediction cost, policy-dependent feedback, and component interactions. Definition~\ref{def:kappa_cr} prices calls in one model, while Propositions~\ref{prop:additive}--\ref{prop:errprop} require explicit objective, benchmark, state, and error relations for composition. Operational safeguards become guarantees only through problem-specific analysis.

\IfFileExists{ACM-Reference-Format.bst}%
  {\bibliographystyle{ACM-Reference-Format}}%
  {\ClassWarningNoLine{acmart}{ACM-Reference-Format.bst not found; falling back to plainnat}%
   \bibliographystyle{plainnat}}
\bibliography{arxiv}

\end{document}